\theoremstyle{plain}
\newtheorem{theorem}{Theorem}
\newtheorem{proposition}[theorem]{Proposition}
\theoremstyle{definition}
\theoremstyle{remark}
\newcommand{\approachfull}{Chunked Temporal Difference}
\newcommand{\approachabbr}{Chunked-TD}
\icmltitlerunning{Sequence Compression Speeds Up Credit Assignment in Reinforcement Learning}
\begin{document}

\twocolumn[
\icmltitle{Sequence Compression Speeds Up \\ Credit Assignment in Reinforcement Learning}

\icmlsetsymbol{equal}{*}

\begin{icmlauthorlist}
\icmlauthor{Aditya A. Ramesh}{usi}
\icmlauthor{Kenny Young}{uofa}
\icmlauthor{Louis Kirsch}{usi}
\icmlauthor{J{\"u}rgen Schmidhuber}{usi,kaust}
\end{icmlauthorlist}

\icmlaffiliation{usi}{The Swiss AI Lab IDSIA, USI \& SUPSI}
\icmlaffiliation{uofa}{University of Alberta and the Alberta Machine Intelligence Institute}
\icmlaffiliation{kaust}{AI Initiative, King Abdullah University of Science and Technology}

\icmlcorrespondingauthor{Aditya A. Ramesh}{aditya@idsia.ch}

\icmlkeywords{Reinforcement Learning,
Credit assignment,
Temporal Difference,
Chunking,
ICML}

\vskip 0.3in
]

\printAffiliationsAndNotice{}  %

\begin{abstract}
Temporal credit assignment in reinforcement learning is challenging due to delayed and stochastic outcomes. 
Monte Carlo targets can bridge long delays between action and consequence but lead to high-variance targets due to stochasticity. 
Temporal difference (TD) learning uses bootstrapping to overcome variance but introduces a bias that can only be corrected through many iterations.
TD($\lambda$) provides a mechanism to navigate this bias-variance tradeoff smoothly.
Appropriately selecting $\lambda$ can significantly improve performance.
Here, we propose \emph{\approachabbr}, which uses predicted probabilities of transitions from a model for computing $\lambda$-return targets. 
Unlike other model-based solutions to credit assignment, \approachabbr \ is less vulnerable to model inaccuracies.
Our approach is motivated by the principle of history compression and `chunks' trajectories for conventional TD learning.
Chunking with learned world models compresses near-deterministic regions of the environment-policy interaction to speed up credit assignment while still bootstrapping when necessary.
We propose algorithms that can be implemented online and show that they solve some problems much faster than conventional TD($\lambda$).
\end{abstract}

\section{Introduction}
\label{intro}

An intelligent agent should recognize which of its actions contributed to success or failure~\citep{minsky1961steps}. 
Assigning credit or blame can be particularly difficult when there are long delays between a critical action and its consequences.
Stochastic events between the action and its eventual consequence further complicate credit assignment.
For example, consider the scenario where taking the umbrella in the morning helped the agent stay dry in the evening rain.

Reinforcement learning (RL) agents can bridge long delays by using Monte Carlo (MC) returns to evaluate---and subsequently improve---their behavior.
However, using MC returns comes at the cost of high variance when the environment or the agent's behavior is stochastic.
Temporal-difference (TD) approaches are designed to counter the variance of MC returns by constructing targets using predicted outcomes in future states (bootstrapping). 
TD lowers variance at the cost of introducing bias.
TD($\lambda$) approaches provide a way to smoothly interpolate between one-step bootstrapping (TD(0)) and MC targets by selecting $\lambda$.
Unfortunately, correcting the bias of TD approaches can be very slow due to long delays between action and consequence~\citep{arjona2019rudder}.
TD with underparameterized function approximation introduces additional bias, even asymptotically~\citep{sutton2018reinforcement}.
Navigating this bias-variance trade-off is critical to achieving accurate and sample-efficient credit assignment~\citep{watkins1989learning, kearns2000bias}.
Previous research has attempted to automate the selection of $\lambda$ to improve the performance of TD algorithms~\citep{sutton1994step, white2016greedy, xu2018meta}.

An alternative to the previously discussed model-free approaches is to build a model of the world and use it for credit assignment.
A world model can make associations between cause and effect, which could be used to guide policy learning.
A prominent approach in this direction is to learn a differentiable model of the world and use gradients for credit assignment~\citep{Werbos:87, Munro:87, Schmidhuber:90diffenglish, heess2015learning}.
Such approaches have successfully been applied to continuous control problems with smooth dynamics~\citep{deisenroth2011pilco, amos2021model}.
Unfortunately, this idea is not easily applicable in environments with discrete actions and is quite sensitive to inaccuracies in the learned model~\citep{DBLP:conf/iclr/HafnerL0B21, meulemans2023would}.

World models can also be used to simulate experience to train the agent's policy or value function~\citep{Schmidhuber:90diffenglish,  Sutton:90dyna, ha2018world, DBLP:conf/iclr/HafnerL0B21}.
Imagination with world models (or experience replay) can propagate credit without requiring the agent to re-experience the same transitions in the true environment.
This can speed up credit assignment in terms of interactions needed with the true environment.
However, imagination-based approaches can drastically fail when the learned world model is inaccurate~\citep{talvitie2017self, abbas2020selective}.

In this paper, we propose an approach that uses (possibly inaccurate) learned models to dynamically chunk trajectories of experience for TD learning.
We introduce \emph{\approachfull \ (\approachabbr)} algorithms that compute adaptive $\lambda$-returns using model predictions for on-policy value learning.
Our approach is inspired by the principle of history compression~\citep{Schmidhuber:91chunker, Schmidhuber:92ncchunker}, which uses predictive coding to remove redundant information from a sequence to shorten credit assignment paths.
Chunking in RL can shorten credit assignment paths in deterministic and predictable regions of the environment-policy interaction and facilitate faster learning while also bootstrapping to handle noise where necessary.
Our proposed algorithms can be implemented online using eligibility traces~\citep{sutton1984temporal}.
Importantly, since the model is used only to decide when to bootstrap, our approach is robust to arbitrary levels of model error and simply degrades toward TD(0) as model errors become extreme.

Our experiments show that Chunked-TD can assign credit much faster than conventional TD($\lambda$) in the considered Markov decision processes with tabular value functions.
We also present a variant of Chunked-TD that can use factored rewards to solve a typical hard credit assignment problem.
\footnote[4]{Code is available at \url{https://github.com/Aditya-Ramesh-10/chunktd}.}

Our approach is closely related to a previously proposed variable TD($\lambda$) strategy~\citep{sutton1994step} which is motivated from the perspective of reducing the bias of TD approaches.
Similar to our approach, it uses transition probabilities from tabular models to decay traces.
As a separate contribution, we revisit their proposed algorithms and show that their approach can be viewed as using `backward' model predictions for constructing $\lambda$-returns.

\section{Preliminaries}
\label{prelims}

\subsection{RL notation and definitions}
\label{prelims:rl_notation}

Consider an agent interacting with a Markov Decision Process (MDP)~\citep{puterman1990markov} with finite state, action, and reward spaces ($\mathcal{S}, \mathcal{A},$ and $\mathcal{R}$) and a discount factor $\gamma \in \left [  0, 1 \right ]$. 
The discount factor can be one in episodic tasks.
The agent interacts with the environment at discrete time steps through actions sampled from a policy $A_t \sim \pi(. | S_t )$, where the policy $\pi : \mathcal{S} \rightarrow \Delta \left ( \mathcal{A} \right )$ maps states to a distribution over actions.
In response to the agent's action, the environment produces a new state and reward according to transition dynamics $S_{t+1} \sim T(.| S_t, A_t)$ and $R_{t+1} \sim R(.| S_t, A_t)$.

Interaction between policy and MDP generates a sequence $$ R_0, S_0, A_0 , R_1, S_1, A_1 \dots S_{t-1}, A_{t-1}, R_t, S_t, A_t \dots ,$$
where $R_0 = 0$ (for convenience and uniformity).
Let $X_t = (R_t, S_t)$ be the `concatenation' of reward and state. 
We will refer to $X_t$ as the percept at time step $t$. 
It represents the response of the environment as a single object. 
In general, we could use $X_t = f(R_t, S_t)$ with some suitable function for state abstraction.

The above interaction can then be re-expressed as $$ X_0, A_0 , X_1, A_1 \dots X_{t-1}, A_{t-1}, X_t \dots.$$

The agent's goal is to maximize the expected discounted return, $ \mathbb{E} \left [\sum_{t=0}^{\infty} \gamma^t R_{t+1} \right ]$, where the expectation accounts for randomness in the environment and the agent's policy.

RL algorithms often learn estimates of state or state-action values as an intermediate step for improvement.
The value of a state $V_{\pi}(s)$ is defined as the expected sum of discounted rewards when starting from $s$ and acting according to $\pi$,  $V_{\pi}(s) = \mathbb{E} \left [ \sum_{t=0}^{\infty} \gamma^t R_{t+1} \right  | S_0 = s]$.
Similarly, action-value functions can be defined as $Q_{\pi}(s,a) = \mathbb{E} \left [ \sum_{t=0}^{\infty} \gamma^t R_{t+1} \right  | S_0 = s, A_0 = a]$. 

Targets for value learning can be obtained via Monte-Carlo estimates or bootstrapped targets through temporal difference.
Interpolating smoothly between MC and one-step bootstrapped targets is possible through the use of $\lambda$-returns~\citep{watkins1989learning, sutton2018reinforcement}. 
The $\lambda$-return ($G_t^{\lambda}$) can be recursively defined as
\begin{align}
\label{eqn:const_lambda_return}
    G_{t}^{\lambda} = R_{t+1} + \gamma \lambda G_{t+1}^{\lambda} + (1 - \lambda ) \gamma \hat{V}(S_{t+1}),
\end{align}
where $\hat{V}(S_{t+1})$ is the current estimate of the value of $S_{t+1}$.
Using $\lambda=1$ amounts to the MC target and $\lambda=0$ uses one-step bootstrapped TD targets.

One can use $\lambda$-return targets to update value functions in the following manner,
\begin{align}
\label{eqn:offline-lambda-update}
    \hat{V} (S_t) \leftarrow \hat{V} (S_t) + \alpha \left ( G_{t}^{\lambda} - \hat{V}(S_t) \right ),
\end{align}
where $\alpha$ is the learning rate.
Similar updates can be used for learning $\hat{Q}(S_t, A_t)$.

Updates with $\lambda$-returns can be done in an offline manner upon completion of episodes.
This is because computing the $\lambda$-return according to Equation~\ref{eqn:const_lambda_return} requires knowing the entire sequence of future states and rewards.
However, it is possible to approximate updates from offline $\lambda$-returns in an online, incremental way through TD($\lambda$), which uses eligibility traces~\citep{sutton1984temporal}.

\begin{figure}[t]
    \centering
    \includegraphics[width=\columnwidth]{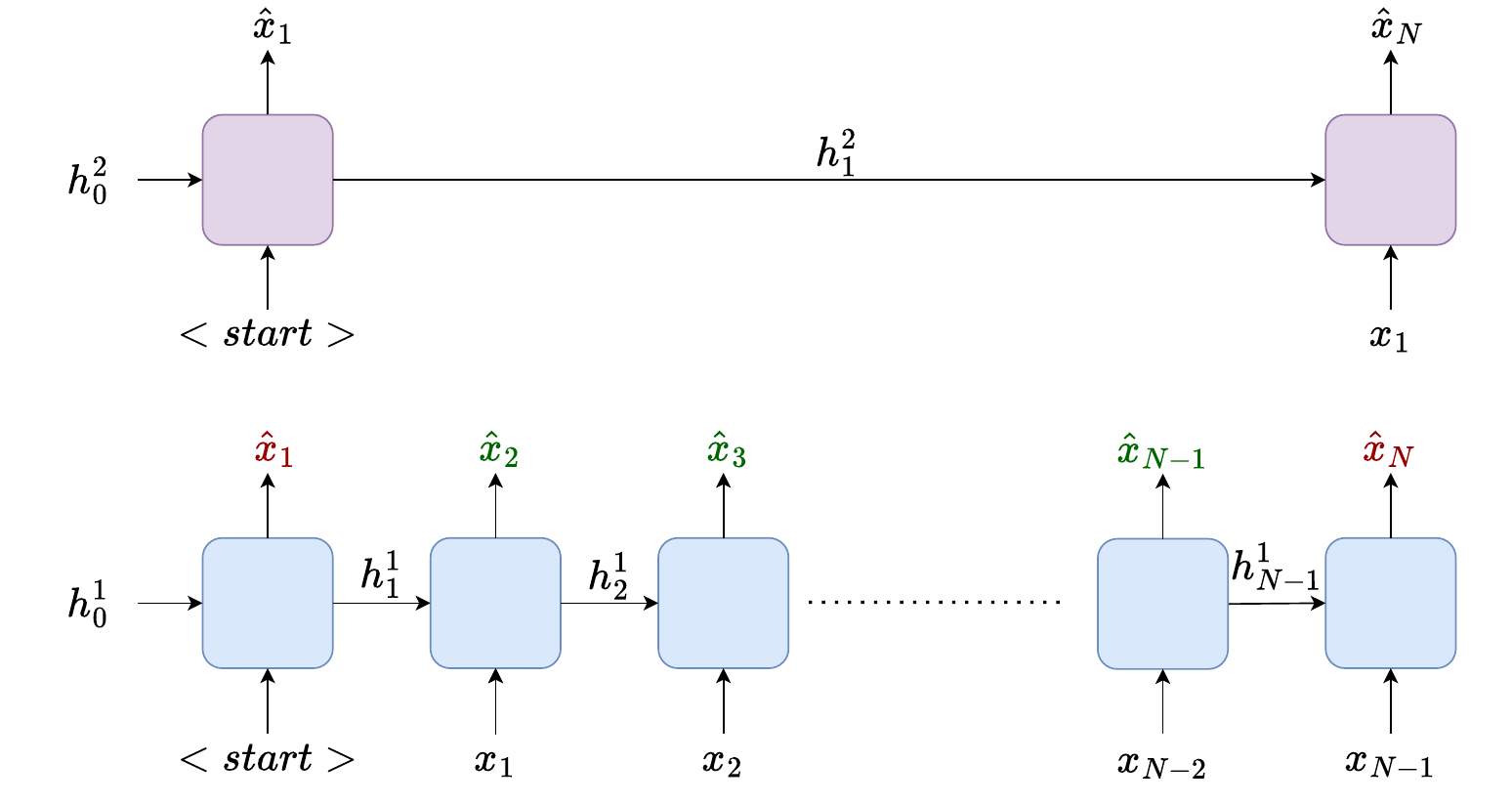}
    \caption{Two-level Chunker-RNNs can shorten credit assignment paths. Only the tokens associated with incorrect predictions (denoted with red) at the lower level are passed on to the higher level RNN (with the start token).
    The hidden state of the RNN at level $l$ and index $i$ is denoted as $h^l_i$.
    }
    \label{fig:chunker_rnn}
\end{figure}

\subsection{Chunking and the principle of history compression}
\label{prelims:chunker}

The principle of history compression~\citep{Schmidhuber:91chunker,Schmidhuber:92ncchunker} suggests removing redundant information from a sequence to shorten credit assignment paths.
The aforementioned paper introduces a multi-level hierarchy of recurrent neural networks (RNNs) for sequence modeling or predictive coding, i.e., predicting the next input token. 
The central idea is that incorrectly predicted next inputs are identified based on a threshold on the error and passed to a higher level RNN, which can relate its inputs with shorter credit assignment paths.

We illustrate the idea through the example presented by \citet{Schmidhuber:92ncchunker} where there are two distinct sequences $<start>, a, i_1, i_2, \dots i_{N-1}, 0$ and $<start>, b, i_1, i_2, \dots i_{N-1}, 1$.
Depending on whether the sequence starts with an $a$ or $b$, the final token is $0$ or $1$.
A standard RNN predicting the next input token finds it challenging to model this sequence due to the long delay between the critical input ($a$ or $b$) and the classification label (0 or 1).
However, the architecture with two RNNs, as presented in Figure \ref{fig:chunker_rnn}, easily solves this problem.

At every time step $t$, each RNN tries to predict its next input token ($x_{t+1}$). 
Since the sub-sequence $i_1, i_2 \dots i_{N-1}$ is highly predictable from local context, the lower level RNN easily learns to predict these tokens.
The `unpredictable' and compressed sequence passed to the higher level would be $<start>, a,  0$ or $<start> ,b,  1$, which removes the delay in credit assignment.

In this paper, we take inspiration from the idea of history compression.
While we do not use a hierarchical policy or value function, 
we employ the idea of `compressing' sequences through next token prediction with a learned model.
This can help address some of the difficulties arising from long gaps between crucial action and rewarding consequences in RL.
In our case, the sequences are trajectories or episodes of interaction in reinforcement learning settings.
As we explain in the next section, we use the compressed sequence to construct targets for value estimation.

\section{Method}
\label{method}

First, in Section~\ref{sec:method:subsec:nstep}, we present an approach that uses $n$-step targets for learning value functions based on a dynamic (and stochastic) choice of $n$ obtained through sequence compression/chunking. 

Next, in Section~\ref{method:chunked-td-lambda}, we reformulate the stochastic chunking procedure as an exact, expected update, which is expressed as a $\lambda$-return.
We show that our update can also be used to implement online algorithms through eligibility traces.

\subsection{$n$-step chunked targets}
\label{sec:method:subsec:nstep}

Long delays between an action and its rewarding consequence make credit assignment in RL difficult~\citep{arjona2019rudder}.
As discussed in Section~\ref{prelims:chunker}, 
removing redundant information from a sequence can shorten credit assignment paths for sequence modeling~\citep{Schmidhuber:92ncchunker}. 
With a similar motivation, we aim to use a learned (generative) model of the environment to shorten credit assignment paths in RL when possible.
We use the predicted probability of the next input under the learned model to chunk an episode of policy-environment interaction.
The chunked episode will provide us with the states from which we bootstrap. 

In our first variant, which applies to bootstrapping from state values, we use $\hat{P}^{\pi}(X_{t+1} | X_t)$, the probability of the next percept under the current policy. The following factorization can be used in MDPs with Markov policies,
\begin{equation}
\hat{P}^{\pi}(X_{t+1} | X_t) = \sum_{a \in \mathcal{A}} \hat{P}(X_{t+1} | X_t, a) \pi( a | X_t), 
\end{equation}
where $\hat{P}(X_{t+1} | X_t, a)$ is the estimated transition probability given a particular action.
Note that while policies are typically defined as a function of states, they can also be defined on percepts directly by constraining policy functions to ignore the reward component of the percept.

Consider the following chunking strategy that applies to learning $\hat{V}(s)$ in episodic MDPs: (1) We always keep the first state $s_0$ (2) For $t \geq 1$ we drop state $s_t$ with probability $\hat{P}^{\pi}(x_{t+1} | x_t )$ (3) Whenever we drop a percept, we sum the intermediate rewards such that the compressed and original episode have equivalent returns (4) We always keep the terminal/ultimate state $s_{T}$.

The compressed episode is used to construct targets for the states encountered in the entire episode.
Consider the example presented in Figure~\ref{fig:sampled-chunked-td-valuel}, where an episode of interaction produces percepts $x_0, x_1, \dots x_{T}$.
All transitions are deterministic and accurately modeled, apart from one transition $x_k \rightarrow x_{k+1}$ which has a low probability.
In this situation, our chunking procedure may return the compressed state sequence of $x_o, x_k, x_T$, which are used for bootstrapping.
All states before $x_k$ using the value of $\hat{V}(s_k)$ to bootstrap.
All states thereafter use Monte-Carlo targets (bootstrapping from the terminal state $\hat{V}(s_T)$ which has a value of zero).

Thus, we can use a learned model to chunk and obtain $n$-step bootstrapped backups with a stochastic $n$. 
The key feature of this approach is that if the environment-policy interaction is quite deterministic and our model is accurate, then it is like Monte Carlo learning. 
If there is high stochasticity, \emph{or if our model is inaccurate}, our approach falls back to bootstrapping, with TD(0) being the extreme case.
A similar approach can be used for bootstrapping from Q-value functions, where we chunk the episode using $\hat{P}^{\pi}(X_{t+1}, A_{t+1} | X_t, A_t)$ (see Appendix~\ref{app:algorithm_variants}).

\begin{figure*}[t]
    \centering
    \includegraphics[width=0.9\textwidth]{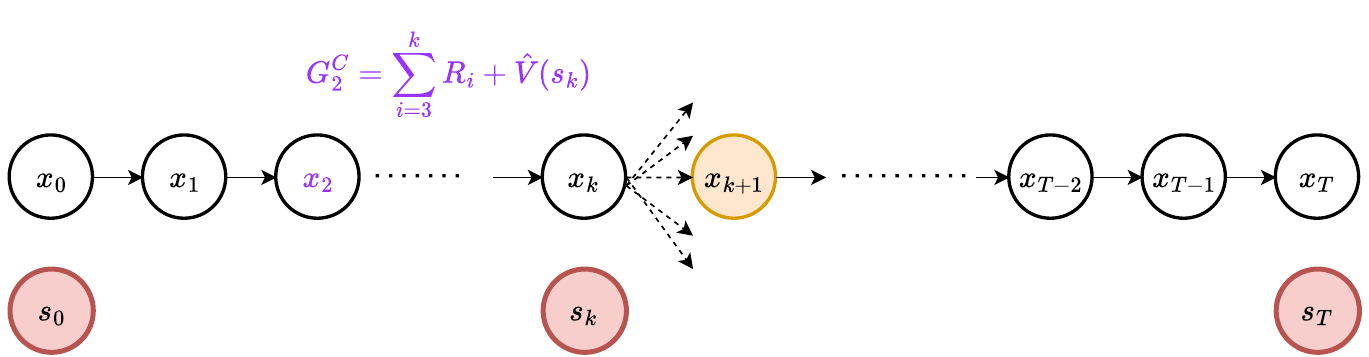}
    \caption{Sample-based chunking (see Section \ref{sec:method:subsec:nstep}) for state value estimation. All next states (and rewards) are deterministic apart from the outcome at $x_k$, where $P_{\pi}(x_{k+1} | x_k)$ is low. The states in red are a sample from our algorithm, i.e., chunking based on model probabilities that provide targets for bootstrapping. 
    The chunked target for $s_2$ (with $\gamma=1$, denoted as $G_2^C$) is shown as an example.
    }
    \label{fig:sampled-chunked-td-valuel}
\end{figure*}

\subsection{Chunked-TD($\lambda$)}
\label{method:chunked-td-lambda}

The sampling-based chunking of the previous section can be reformulated as an expected update without the variance of sampling indicator variables to drop or keep states. 

We recall the recursive definition of the $\lambda$-return ($G_{t}^{\lambda}$) with a time-varying $\lambda_{t+1}$, 
\begin{equation}
\label{eqn:variable_lambda_return}
    G_{t}^{\lambda} = R_{t+1} + \gamma \lambda_{t+1} G_{t+1}^{\lambda} + (1 - \lambda_{t+1}) \gamma V(S_{t+1}).
\end{equation}

As motivated in Section~\ref{sec:method:subsec:nstep}, we bootstrap based on predictions from a forward model.
Concretely, we keep $S_{t+1}$ in the compressed sequence with probability $1 - \hat{P}^{\pi}(x_{t+2}| x_{t+1})$.
Keeping it in the compressed sequence is equivalent to using it for bootstrapping, i.e., using $\lambda_{t+1} = 0$ in Equation~\ref{eqn:variable_lambda_return}.

It can be seen that using $\lambda_{t+1} = \hat{P}^{\pi}(x_{t+2}| x_{t+1})$ is equivalent to the expected update from the chunking procedure described in the previous section.

\paragraph{Eligibility-trace based online algorithm}

\begin{algorithm}[tb]
\caption{Chunked-TD state value evaluation}
\label{alg:chunked_td_v_v}
\begin{algorithmic}
\ENSURE Eligibility $e(s) = 0 \ \forall s \in \mathcal{S}$ at the beginning of each episode
\FOR{each step of the episode $(X_t, A_t, X_{t+1})$}
\STATE Optionally train dynamics model

\FOR{each state $s$}
\STATE $e(s) \gets \gamma  \hat{P}^{\pi}( X_{t+1} | X_t ) e(s)$ %
\ENDFOR
\STATE $e({S_t}) \gets e({S_t}) + 1$ 
\STATE $\delta_t \gets  R_{t+1} + \gamma \hat{V}(S_{t+1}) - \hat{V}(S_t)$
\FOR{each state $s$}
\STATE $\hat{V}(s) \gets \hat{V}(s) + \alpha \delta_t e(s)$
\ENDFOR
\ENDFOR
\end{algorithmic}
\end{algorithm}

Since the target for $S_t$ ($G_t^{\lambda}$) depends on $S_{t+1}$ and $S_{t+2}$, it may not be immediately obvious that updates based on such a $\lambda$-return can be implemented online.
We show that online updates are indeed possible. Algorithm \ref{alg:chunked_td_v_v} presents an incremental and online algorithm that approximates updates based on the $\lambda$-return described in Equation \ref{eqn:variable_lambda_return} with $\lambda_{t+1} = \hat{P}^{\pi}(x_{t+2}| x_{t+1})$.
The equivalence is exact at the end of episodes in acyclic MDPs with tabular value functions.

\begin{proposition}
    Let $M$ be an acyclic episodic Markov decision process with state space $\mathcal{S}$ and let $\hat{V}: \mathcal{S} \rightarrow \mathbb{R}$ be the estimated tabular value function.
    Let the sequence $x_0, a_0, x_1, a_1 \dots x_T$ correspond to an episode of interaction.
    Then updates from offline-$\lambda$ returns from Equations~\ref{eqn:offline-lambda-update} and \ref{eqn:variable_lambda_return} with $\lambda_{t+1} = \hat{P}^{\pi}(x_{t+2}| x_{t+1})$ match the total updates made by Algorithm~\ref{alg:chunked_td_v_v} at the end of the episode.
\end{proposition}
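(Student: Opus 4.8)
The plan is to prove the classical forward-view/backward-view equivalence, adapted to the time-varying decay $\lambda_{t+1} = \hat{P}^{\pi}(x_{t+2}\mid x_{t+1})$, by rewriting \emph{both} the offline $\lambda$-return update and the total trace-based update of Algorithm~\ref{alg:chunked_td_v_v} as the same weighted sum of one-step TD errors $\delta_k = R_{k+1} + \gamma\hat{V}(S_{k+1}) - \hat{V}(S_k)$, and then comparing them coefficient by coefficient for each visited state. First I would unroll the recursion of Equation~\ref{eqn:variable_lambda_return}. Adding and subtracting $\gamma\lambda_{t+1}\hat{V}(S_{t+1})$ gives the identity $G_t^{\lambda} - \hat{V}(S_t) = \delta_t + \gamma\lambda_{t+1}\bigl(G_{t+1}^{\lambda} - \hat{V}(S_{t+1})\bigr)$, and iterating this down to the terminal state (where $\hat{V}(S_T)=0$) yields
\[
G_t^{\lambda} - \hat{V}(S_t) = \sum_{k=t}^{T-1}\gamma^{\,k-t}\Bigl(\prod_{i=t+1}^{k}\lambda_i\Bigr)\delta_k ,
\]
with the empty product equal to $1$. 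By Equation~\ref{eqn:offline-lambda-update} the offline contribution to state $S_t$ is $\alpha$ times this sum, and since $M$ is acyclic each state is visited at most once, so this is the \emph{entire} forward-view update to that state.

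Next I would track the eligibility trace $e(S_t)$ through Algorithm~\ref{alg:chunked_td_v_v}. By acyclicity $e(S_t)=0$ until step $t$, where the decay leaves it zero and the increment sets it to $1$; at each later step $k>t$ the decay multiplies it by $\gamma\hat{P}^{\pi}(X_{k+1}\mid X_k)=\gamma\lambda_k$, so at step $k\ge t$ we have $e(S_t)=\gamma^{\,k-t}\prod_{i=t+1}^{k}\lambda_i$. Summing the per-step increments $\alpha\delta_k\,e(S_t)$ over $k=t,\dots,T-1$ then reproduces exactly the forward-view sum above, which establishes the desired match.

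The main obstacle is that Algorithm~\ref{alg:chunked_td_v_v} modifies $\hat{V}$ \emph{within} the episode, whereas the offline $\lambda$-return is computed from the frozen start-of-episode values; in a general (cyclic) MDP these intermediate updates perturb later TD errors, and the forward/backward equivalence is then only approximate. I expect the crux of the argument to be showing that this perturbation vanishes here: by acyclicity the states $S_k$ and $S_{k+1}$ entering $\delta_k$ have zero trace at every step $\le k$, so their values are never updated before $\delta_k$ is formed, and conversely $\hat{V}(S_k)$ appears in no later TD error. Hence every $\delta_k$ computed online coincides with the one computed from the frozen value function, which is precisely what upgrades the match from approximate to exact and completes the proof.
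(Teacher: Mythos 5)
Your proposal is correct and takes essentially the same route as the paper's proof: unroll the recursion of Equation~\ref{eqn:variable_lambda_return} via the add-and-subtract of $\gamma\lambda_{t+1}\hat{V}(S_{t+1})$ to write $G_t^{\lambda}-\hat{V}(S_t)$ as $\sum_{k=t}^{T-1}\gamma^{k-t}\bigl(\prod_{i=t+1}^{k}\lambda_i\bigr)\delta_k$, then check that the eligibility trace in Algorithm~\ref{alg:chunked_td_v_v} accumulates exactly these coefficients. Your closing paragraph—using acyclicity to argue that each online $\delta_k$ equals the frozen-value TD error, since $\hat{V}(S_k)$ and $\hat{V}(S_{k+1})$ carry zero trace before $\delta_k$ is formed—makes explicit a point the paper's sketch leaves implicit, and is exactly what justifies the claimed exactness.
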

\begin{proof} Proof sketch only.
We know that a state in the sequence, $s_t$, is encountered exactly once at step $t$ since $M$ is acyclic.
Without loss of generality, we consider a single state $s_t$.

Consider the update term $u_t$ added to $\hat{V}(s_t)$ by the offline-$\lambda$ return of Equation \ref{eqn:offline-lambda-update},
$$
    u_t = \alpha (G_t^\lambda-\hat{V}(s_t)).
$$
Unrolling the recursive definition from Equation \ref{eqn:variable_lambda_return}, we get
$$
    u_t = \alpha (r_{t+1} + \gamma \lambda_{t+1} G_{t+1}^{\lambda} + (1 - \lambda_{t+1}) \gamma \hat{V}(s_{t+1}) -\hat{V}(s_t)).
$$
Add and subtract \textcolor{blue}{$\gamma \lambda_{t+1} \hat{V} (S_{t+1})$}, 
\begin{align*}
    u_t &= \alpha( r_{t+1} + (1-\lambda_{t+1}) \gamma \hat{V}(s_{t+1}) \textcolor{blue}{+ \gamma \lambda_{t+1} \hat{V}(s_{t+1})} \\
    & - \hat{V}(s_{t})
    + \gamma \lambda_{t+1} G_{t+1}^{\lambda} \textcolor{blue}{- \gamma \lambda_{t+1} \hat{V}(s_{t+1})}). \\
\end{align*}
Using $\delta_t = r_{t+1} + \gamma \hat{V}(s_{t+1}) - \hat{V}(s_t)$, and unrolling,
\begin{align*}
u_t &= \alpha(\delta_t + \gamma \lambda_{t+1}(G_{t+1}^{\lambda} - \hat{V}(s_{t+1})) \\
    &= \alpha(\delta_t + \gamma \lambda_{t+1}(\delta_{t+1} + \gamma \lambda_{t+2} (G_{t+2}^{\lambda} - \hat{V}(s_{t+2}) ))).    
\end{align*}

Unrolling on, and noting  $\lambda_{t+1} = \hat{P}^{\pi}(x_{t+2} | x_{t+1})$,
\begin{align}
\label{eqn:offline-chunked-update}
    u_t = \alpha \sum_{k=t}^{T-1} \gamma^{k-t} \delta_k \prod_{i=t+1}^{k} \hat{P}^{\pi} (x_{i+1} | x_i ).
\end{align}

Now, let us consider the updates made by Algorithm~\ref{alg:chunked_td_v_v}. Only TD errors encountered from $s_t$ onward contribute towards updating $\hat{V}(s_t)$. The first transition from $s_t$ adds $\alpha \delta_t$ to $\hat{V}(s_t)$ (as $e(s_t)$ is 1 when $s_t$ is first encountered). The next update adds $\alpha \gamma \delta_{t+1} \hat{P}^{\pi} (x_{t+2} | x_{t+1})$, the one after adds $\alpha \gamma^2 \delta_{t+2} \hat{P}^{\pi} (x_{t+3} | x_{t+2}) \hat{P}^{\pi} (x_{t+2} | x_{t+1})$. Summing all of them gives the same update as Equation~\ref{eqn:offline-chunked-update}.
\end{proof}

So far, we have presented Chunked-TD for state value estimation.
Chunked algorithms can similarly be defined for action-value functions, which can be used to implement SARSA-like algorithms~\citep{rummery1994line, van2009theoretical} for evaluation and control.
We present these variants in Algorithms~\ref{alg:chunked_td_q_q} and \ref{alg:chunked_td_q_v} (see Appendix~\ref{app:algorithm_variants}).

\paragraph{Backward model} 
An interesting alternative is to use a backward model for chunking or choosing $\lambda_{t+1}$.
There one would have to model $\hat{P}^{\pi}(X_{t}, A_t | X_{t+1}, A_{t+1})$ or $\hat{P}^{\pi}(X_{t} | X_{t+1})$.
Intuitively, this will bootstrap more from states that may be reached by many different paths, which in turn means their value is likely to be updated many times between each visit to a particular transition.
Conversely, if a state is only ever visited from a single predecessor, bootstrapping is unlikely to be helpful as the value of that state and its predecessor will always be updated together, even with MC updates.
In Appendix~\ref{app:sns94}, we show that, with careful analysis, a previously proposed algorithm by \citet{sutton1994step} can be interpreted as using a $\lambda$-return by chunking with a backward model.
\footnote[2]{While the trace-based TDC algorithm in \citet{sutton1994step} uses a forward model to cut traces, with adjustments to the learning rate the corresponding $\lambda$-return (or forward view) amounts to using a backward model. Please see Appendix~\ref{app:sns94} for further details.}
A crucial disadvantage of backward models is that they are policy-dependent~\citep{chelu2020forethought}, making them harder to learn, especially in control, i.e., learning the optimal policy.

\section{Experiments}

This section presents an empirical comparison of Chunked-TD and conventional TD($\lambda$), i.e., with a constant scalar $\lambda$, for learning tabular action-value functions.
We conduct experiments in MDPs with fully-observable states to avoid conflating memory and credit assignment~\citep{osband2019behaviour,
ni2023transformers}.

We note that in our experiments, we set $X_t = S_t$ for Chunked-TD as all randomness related to the reward is absorbed in the next state in our considered environments.

\subsection{Chain-and-Split}
\label{exps:cns}

Chain-and-Split (see Figure~\ref{fig:simple_env}) is an environment that requires adaptive values of $\lambda$ for efficient policy evaluation.
It is an episodic MDP in which only the first action influences transitions and rewards.
One action leads to a `delayed' deterministic reward, whereas the other actions lead to stochastic consequences.

\begin{figure}
    \centering
    \includegraphics[width=0.9\columnwidth]{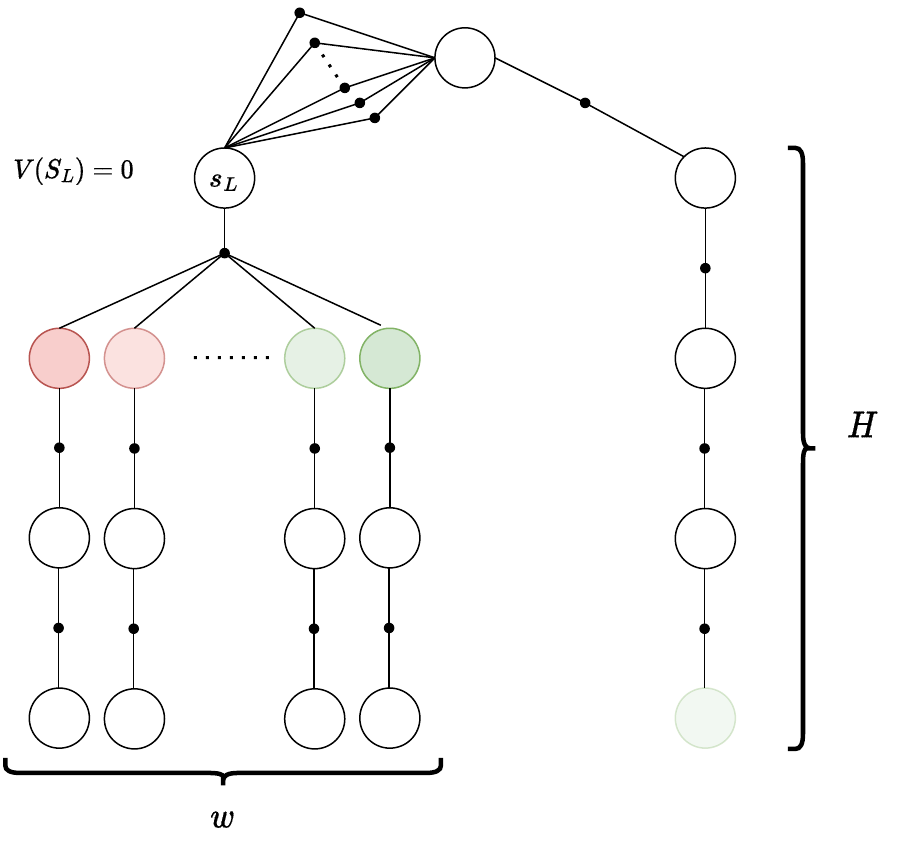}
    \caption{
    Chain-and-Split environment.
    States are denoted by large circles and actions by lines leading to the small black circles.
    Colored states have a reward upon entering that state.
    }
\label{fig:simple_env}
\end{figure}

\paragraph{Environment specification}

The agent has $n$ actions $\{ a_1, a_2, \dots a_n\}$ available  at the start/root state.
After the first step, there is only one action available in every state.
Taking $a_1$ at the start state leads to a linear chain of length $H$, which ultimately provides a deterministic reward of $0.01$.
This is the optimal choice.
All remaining actions lead to a common `parent' state on the left ($s_L$), which has a zero value but is followed by noisy transitions/rewards.
The state $s_L$ branches to one of $w$ states with equal probability. 
Each of the $w$ states provides a deterministic reward between -1 and 1, with the average over all states being equal to zero.
We set $H=20$, $n=10$, and $w=101$ for our experiment.

\paragraph{Evaluation details}

We compare the performance of SARSA(0), SARSA(1)/MC, and Chunked SARSA (Algorithm~\ref{alg:chunked_td_q_q}).
Tabular Q-values are learned from data collected by a policy that takes uniform random actions.
We use a learned tabular model for computing $\lambda_{t+1}$ for chunking. 
See Appendix~\ref{app:exps:cns} for further details.

We compare algorithms based on the action-value gap with respect to the optimal action. Concretely, we consider the quantity $\Delta Q = Q(start, a_1) - \max_{a \in \{  a_2, a_3 \dots a_n \} } Q(start, a)$, which is positive only when the optimal action is assigned a higher value than all others.
We consider multiple learning rates for each approach (see Appendix~\ref{app:exps:cns}).

\paragraph{Results}

\begin{figure}
    \centering
    \includegraphics[width=\columnwidth]{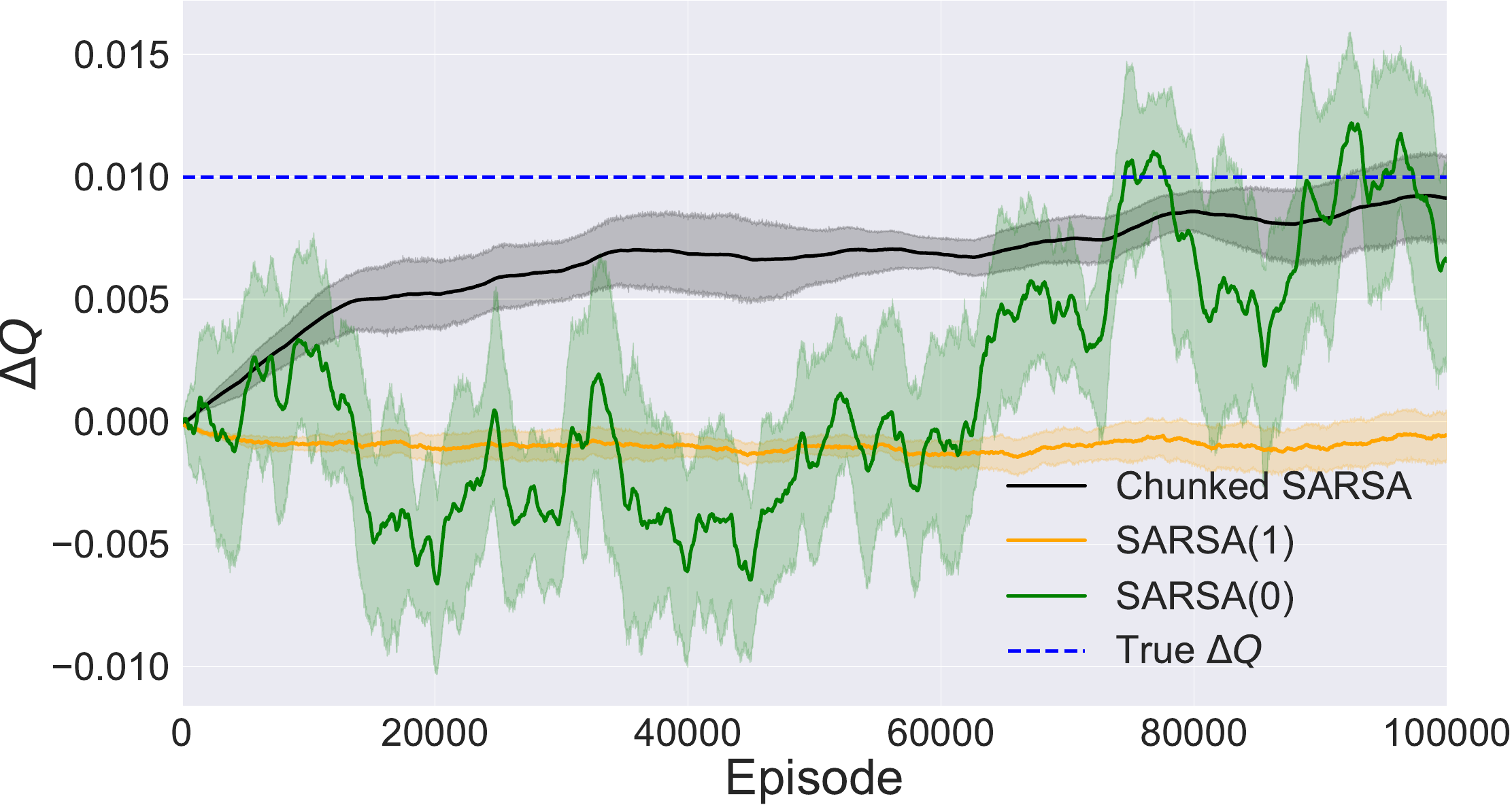}
    \caption{
    Results from the Chain-and-Split environment. The action-value gap ($\Delta Q$) between optimal action $a_1$ and the maximum action value among the remaining actions.
    The true value of $\Delta Q = 0.01$. 
    Shading indicates 95\%
    bootstrapped confidence intervals over 10 independent trials.
    }
    \label{fig:results_q_gap_simple_env}
\end{figure}

We present results with the best learning rate for each approach.
Figure~\ref{fig:results_q_gap_simple_env} shows that Chunked SARSA quickly discovers that taking $a_1$ is the best choice and smoothly approaches the true value of $\Delta Q$.

In contrast, MC/SARSA(1) fails because it does not bootstrap from the value of $s_L$.
This results in learning entirely independent action values $\hat{Q} (start, a_i), i \in \{ 2, 3 \dots 10\}$, without relying on the fact that all these actions actually lead to the same next state ($s_L$ in Figure~\ref{fig:simple_env}).
This results in negative values of $\Delta Q$, as the value of some action $ \hat{Q} (start, a_i), i \in \{ 2, 3 \dots 10\}$ is typically overestimated due to limited samples.

SARSA(0) bootstraps immediately, which is good for the left sub-tree as it correctly averages all the experience from sub-optimal actions.
However, SARSA(0) is slower and less stable than Chunked SARSA.
This is because SARSA(0) requires many experiences of taking $a_1$ to back up values along the linear chain on the right since the reward is provided only $H$ steps after selecting $a_1$ in the start state.

Additional results with longer evaluation
are reported in Appendix~\ref{app:exps:cns}.

\subsection{Accumulated-Charge}
\label{exps:acc_charge}

\begin{figure}
    \centering
    \includegraphics[width=0.99\columnwidth]{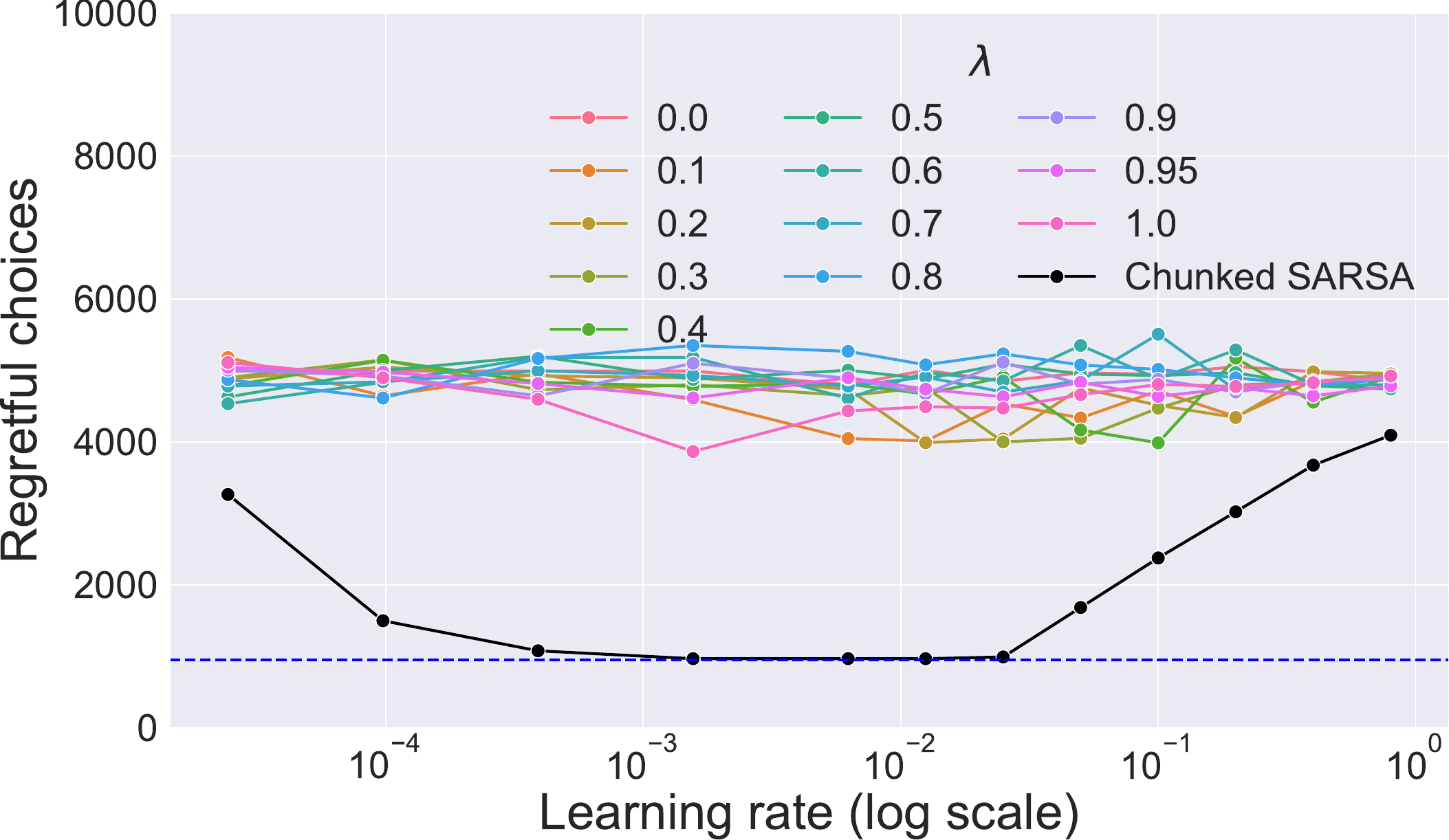}
    \caption{
    Results from the Accumulated-Charge environment.
    The average number of regretful choices over 10 runs for different algorithms with different learning rates for the accumulated-charge experiment.
    The dashed blue line represents the best possible expected regretful choices under the exploration scheme.
    }
\label{fig:results-acc-charge-td}
\end{figure}

In the previous section, we looked at evaluating actions under a uniform random policy. Here, we focus on control, i.e., learning optimal behavior.
We consider a variant of the Choice environment studied by \citet{arjona2019rudder}.
This environment illustrates the benefit of our approach in environments where transitions are mostly deterministic, except at certain points, leading to highly compressible trajectories.

\paragraph{Environment specification}

The agent has two available actions in the initial state ($a_1$ and $a_2$) and only one available action ($a_1$) thereafter.

The state is represented as three components $s^1, s^2, s^3$. 
$s^1$ is a boolean indicating whether the first action was $a_1$.
$s^2$ is a non-negative integer that indicates the total charge accumulated so far.
$s^3$ is the time step.

During the episode, at $k$ values of the time step, the agent stochastically accumulates `charge'.
The $k$ time steps where a charge is accumulated are fixed for the interaction but are randomly chosen for each independent initialization (seed) of the environment.
Further details are presented in Appendix~\ref{app:exps:acc_charge}.
In our selected setting, the episode lasts $H=200$ steps, and the agent encounters $k=10$ accumulation points, where it accumulates charge sampled from a binomial variable with $n=H/k$ and $p=0.5$.

A reward is provided  at the final time step.
The final reward depends on the amount of charge accumulated and on a bonus $b$ that depends on whether the first action was $a_1$ or $a_2$.
Choosing $a_1$ leads to an expected return of $b$, and choosing $a_2$ leads to an expected return of zero. We set $b=0.1$ in our experiment. 
The precise reward function is described in Appendix~\ref{app:exps:acc_charge}.

\paragraph{Evaluation details}

We compare the performance of Chunked SARSA and SARSA($\lambda$) over multiple choices of $\lambda$ and the learning rate.
As the performance measure, we look at the number of regretful choices made by each algorithm.
If the agent's choice at the first time step is not the optimal choice (i.e., $a_1$), it is labeled as regretful.
We learn tabular Q-values.
The agent takes uniform random actions in the first 1000 episodes and then acts $\epsilon$-greedily for 9000 episodes, with $\epsilon=0.1$.

We learn a neural network model to compute $\lambda_{t+1}$ for Chunked SARSA.
The neural network is trained to predict $\Delta s_t = s_{t+1} - s_{t}$ based on $s_t$ and $a_t$.
The neural network is trained on samples from a replay buffer.
Details about the architecture and training are provided in Appendix~\ref{app:exps:acc_charge}.

\paragraph{Results}
 
Figure~\ref{fig:results-acc-charge-td} shows the number of regretful choices made by Chunked SARSA and SARSA($\lambda$) for different values of $\lambda$ and the learning rate.
We see that Chunked SARSA performs best in this environment across several learning rates.
These results are also presented with 95\% bootstrapped confidence intervals in Figure~\ref{fig:acc_charge_with_err_bounds} (Appendix~\ref{app:exps:acc_charge}).
We see that SARSA($\lambda$) exhibits a lot of variance, achieving low regret on some trials and high regret on others.
The reliable success of Chunked SARSA is due to the learned model's accurate prediction of transitions where charge does not accumulate. 
This bridges delays while still bootstrapping to a larger degree upon charge accumulation.

\subsection{Key-to-Door with factored rewards}
\label{exps:k2d}

\begin{table*}[bt]
\setlength{\tabcolsep}{5pt}
    \vspace{-5pt}
    \footnotesize
    \centering
    \caption{
    Number of episodes in which the treasure is not collected (out of 5000). Mean and standard deviation over 10 independent trials.
    }
    \begin{tabular}{lccccccc}
      \toprule
       $\lambda$ & $0$ & $0.1$ & $0.5$ & $0.9$ & $1$ & C-default & C-factored\\
      \midrule
      Episodes &  $1881 \pm 1950$ & $2277 \pm 2068$ & $2273 \pm 2076$ & $2688 \pm 2119$ & $2261 \pm 582$ &  $2285 \pm 2061$& $\mathbf{669 \pm 17}$\\
      \bottomrule
    \end{tabular}
    \vspace{-5pt}
    \label{tab:key-to-door-results}
\end{table*}

A limitation of our Chunked-TD approach is the presence of noisy components that don't alter the expected return.
Since noise is inherently incompressible, our approach would not be able to solve some \emph{hard} credit assignment problems previously considered in the literature~\citep{meulemans2023would, ni2023transformers}.

However, if we can decompose or factor rewards in a certain way, we can apply our chunking idea on a per-component basis to perform well in such environments.

\paragraph{Environment specification}
The environment state comprises of $n_d$ + 4 components.
There are $n_d$ Boolean distractors ($ s_{d_1} \dots s_{d_{n_d}}$), which are sampled independently from a Bernoulli distribution with probability $p=0.5$ at each time step.
The state includes Boolean variables that indicate whether the agent has the key, or is in the door or treasure state ($s_{key}, s_{door}, s_{treasure}$). 
Another component indicates the time step ($s_{time}$).
Further details about the state and reward representation are available in Appendix~\ref{app:exps:k2d}.

The agent has two available actions at every step.
They correspond to ``pick key" and ``unlock door".
Using ``pick key" in the start state turns the ``key" boolean on.
If the agent has the key, using ``unlock door" at the door state (penultimate time step) will lead to the treasure.
Other than this, the actions have no effect on the state.
The environment is episodic with $H=100$ steps and $n_d=4$.

The environment returns a vector of rewards, one for each state component.
If the treasure bit is on, it corresponds to a treasure reward of $0.01$.
Each distractor being on corresponds to a distracting reward of $0.01/(n_d)$.
All other components are associated with zero rewards.
The optimal policy is to pick up the key in the first step and use it to access the treasure in the last step.

\paragraph{Evaluation details}
When rewards are decomposed, we can learn independent tabular Q-values for each reward component, i.e., $n_d + 4$ Q-functions $\hat{Q}_1 (s, a), \hat{Q}_2 (s, a), \dots \hat{Q}_{n_d + 4} (s, a)$. 
Each $Q_i (s, a)$ maps the whole state vector to the Q-value of a particular reward component.
We act according to the global Q-value estimate, which is the sum of component-wise tabular Q-values~\citep{russell2003q}, $\hat{Q} (s, a) = \sum_i^{n_d + 4} Q_i (s, a)$.
See Appendix~\ref{app:algorithm_variants} for additional details.
We compare three approaches to learning Q-values.

The first corresponds to Expected-SARSA($\lambda$), where the value of $\lambda$ is a fixed scalar and remains the same for each $Q_i (s, a)$.
The second is the chunked counterpart of Expected-SARSA (Algorithm~\ref{alg:chunked_td_q_v}), but uses the same value of $\lambda_{t+1}$ for each $Q_i (s, a)$, where $\lambda_{t+1} = \hat{P}^{\pi} (x_{t+2} | x_{t+1})$ as is usual.
We refer to this variant as \emph{C-default}.

The third is Chunked Expected-SARSA, which uses different values of $\lambda^i_{t+1}$ for each $Q_i (s, a)$. 
We refer to this algorithm as \emph{C-factored}.
Specifically, the bootstrapping factor of each component only depends on the predictability of that component, $\lambda^i_{t+1} = \hat{P}^{\pi} (x^i_{t+2} | x_{t+1})$.
The exact algorithm is presented in Appendix~\ref{app:algorithm_variants}.

Actions are selected $\epsilon$-greedily from the global Q-value. $\epsilon$ is annealed from 1 to 0.1 over the first 500 episodes.
The agent interacts with the environment for 5000 episodes. We compare algorithms based on the number of episodes in which the treasure was not collected.
We consider multiple learning rates for each approach (see Appendix~\ref{app:exps:k2d}).

\paragraph{Results}

Table~\ref{tab:key-to-door-results} presents results for the best choice of learning rate for each approach.
Results with all considered learning rates are presented in Figure~\ref{fig:k2d-all-lrs}.
Expected-SARSA($\lambda$), with $\lambda \in \{ 0, 0.1, 0.5, 0.9, 1\}$, does not reliably learn to collect the treasure.
Similarly, the default version of Chunked Expected-SARSA (C-default) performs poorly.

Since rewards are suitably decomposed, we could use the factored version of Chunked Expected-SARSA (C-factored).
C-factored bootstraps separately based on the predictability of each component of the state.
For e.g., notice that the key, door, treasure, and time steps are perfectly predictable from the previous state.
C-factored behaves like an MC update with respect to the value of only these factors. 
This allows the agent to better estimate Q-values quickly and successfully collect the treasure in most episodes.

\section{Related Work}
The algorithms introduced in this paper are closely related to the ones proposed by \citet{sutton1994step}.
Their algorithms are motivated from the perspective of eliminating the bias of temporal difference algorithms, which is different from the compression-based motivation~\citep{Schmidhuber:92ncchunker} we present here.
Similar to our approach, their Corrected-TD and TD($n/n$) algorithms use estimated forward transition probabilities from tabular models to decay traces.
They also use state visitation counts to adapt the learning rate.
As we show in Appendix~\ref{app:sns94}, their algorithm amounts to using a $\lambda$-return based on a `backward' dynamics model's predictions.
In contrast, our $\lambda$-return is defined through predictions of a forward model. 
Forward models can be easier to learn as they are not policy-dependent and can use replay buffers for training~\citep{chelu2020forethought}.

Unlike our model-based approach to select $\lambda$, other works have studied model-free RL algorithms with adaptive or state-dependent values of $\lambda$.
\citet{white2016greedy} discuss several factors that could guide the choice of $\lambda$ in TD algorithms: bias-variance control, confidence in value estimates, etc.
They design an objective that optimizes the immediate bias-variance trade-off to generate targets as a mixture of TD(0) and MC.
\citet{riquelme2019adaptive} propose an approach to policy evaluation that switches between TD(0) and MC based on confidence estimates of the value at that state.
\citet{watkins1989learning} proposes the Q($\lambda$) algorithm, which cuts the trace whenever an exploratory action is taken.
\citet{xu2018meta} use meta-gradients to adjust $\lambda$.

A differentiable world model can be used to directly obtain gradients for policy improvement~\citep{Werbos:87, Munro:87, Schmidhuber:90diffenglish}.
In deterministic environments with continuous actions, we can update a policy by coupling it with a differentiable world model.
This idea has been extended to handle stochastic environments/policies and to use actual trajectories~\citep{heess2015learning}.
However, these approaches are not easily applicable to environments with discrete actions and are quite sensitive to inaccuracies in the learned model~\citep{DBLP:conf/iclr/HafnerL0B21, meulemans2023would}.
Our chunking-based approach presents a new way of using world models with actual trajectories for credit assignment in environments with discrete actions.

Learned models can also be used for imagination-based planning~\citep{Schmidhuber:90diffenglish,  Sutton:90dyna, ha2018world,
DBLP:conf/iclr/HafnerL0B21}.
Experience replay can be seen as a non-parametric model and be used to drive a similar effect~\citep{van2019use}.
Using backward-model-based imagination can also speed up credit assignment by assigning credit to all state-actions that could have led to the current outcome~\citep{moore1993prioritized, goyal2018recall, pitis2018source, van2021expected}.
The benefits of using imagined experience are orthogonal to our proposed approach which focuses on assigning credit based on real experience.

Alternatives to standard forward dynamics models can be used for assigning credit with real experience.
The hindsight credit assignment (HCA) family of approaches learns a temporally extended inverse dynamics model to ascertain the influence of actions~\citep{harutyunyan2019hindsight, alipov2021towards, meulemans2023would}.
Similar to backward models, temporally extended inverse dynamics models are policy-dependent and can be challenging to learn.

Several works identify the need to move beyond conventional TD($\lambda$) to tackle challenging temporal credit assignment problems; see the recent survey by \citet{pignatelli2023survey}. Finally, \citet{arumugam2021information} present an information-theoretic formulation of the credit assignment problem that may present interesting connections to our compression-based approach.

\section{Limitations}

A crucial limitation of our approach is that learning a model of the world can be challenging and expensive.
Furthermore, our approach requires a `generative' model that can estimate the probability of the next state, given the current state and action.
Nevertheless, our approach could be particularly useful when pre-trained generative models are already available or are being learned for imagination-based planning.

In this paper, we focus on tabular value functions to avoid conflating the generalization effects of function approximation with credit assignment while developing an initial understanding of Chunked-TD.
Further work is required to extend Chunked-TD to continuous states and actions where probability mass functions are unavailable.
One possibility is to chunk with discretized tokens~\citep{janner2021offline} or with discrete latent representations (e.g., \citealp{van2017neural}).
Another option is to apply a transformation on the error in the model’s prediction to obtain $\lambda_t$.
Systematically extending our approach to such settings requires careful consideration of additional design choices (and associated hyperparameters) and is deferred to future work.

Our approach relies on a suitable reward decomposition to solve hard credit assignment problems like the key-to-door (Section~\ref{exps:k2d}).
Learning such decompositions is challenging in general, but some works aim to tackle similar problems through the use of successor features~\citep{barreto2017successor}, reward features~\citep{meulemans2023would} and general value functions~\citep{van2017hybrid}. 

\section{Conclusion}

In this paper, we introduce Chunked Temporal Difference, an approach that uses predictions from a learned model to construct adaptive $\lambda$-returns for value learning in on-policy settings.
Our work presents a new perspective on how learned world models can be used for credit assignment.
Building upon the idea of history compression, we use model predictions to reduce trajectory descriptions, thus shortening credit assignment paths.

We propose algorithms that can be implemented online and show that they solve some problems much faster than conventional TD($\lambda$).
Our approach introduces a promising way to use a learned model that is less vulnerable to model inaccuracies.
Since our approach only uses the learned model to chunk trajectories, it can also be useful when models are only accurate in some regions of the state space.

The clearest direction for future work is to extend this idea to higher dimensional state spaces and continuous actions.
Other exciting next steps are to generalize our approach to the off-policy setting and to further explore different methods of chunking trajectories, e.g., with policy-conditioned backward models or even in model-free ways.

\section*{Acknowledgements}

We would like to thank Kazuki Irie for helpful discussions.
This research was supported by an ERC Advanced Grant (742870) and a Swiss National Science Foundation grant (200021\_192356). Computational resources for this work were provided by the Swiss National Supercomputing Centre (CSCS project s1205).

\section*{Impact Statement}

This paper presents work whose goal is to advance the field of Machine Learning. There are many potential societal consequences of our work, none of which we feel must be specifically highlighted here.

\bibliography{references}
\bibliographystyle{icml2024}

\newpage
\appendix
\onecolumn

\section{Experiments}

\subsection{Chain-and-Split}
\label{app:exps:cns}

\subsubsection{Environment details}

The environment is depicted in Figure~\ref{fig:simple_env}.
The agent has $n$ actions $\{ a_1, a_2, \dots a_n\}$ available  at the start/root state.
After the first step, there is only one action available in every state.
Taking $a_1$ at the starting state leads to a linear chain of length $H$, which ultimately provides a deterministic reward of $0.01$.
This is the optimal choice.
All remaining actions lead to a common `parent' state on the left ($s_L$), which has a value of zero but is followed immediately by a stochastic transition that leads to a range of different rewards.
The state $s_L$ branches to one of $w$ states with equal probability. 
Each of the $w$ states provides a deterministic reward between -1 and 1, with the average over all states being equal to zero.
We set $H=20$, $n=10$, and $w=101$ for our experiment.
The discount factor $\gamma=1$.

\subsubsection{Implementation details}
We learn tabular Q-value functions for SARSA($\lambda$) and Chunked SARSA.
The tabular Q-values are initialized at zero.

\paragraph{Chunked SARSA}
We use Algorithm~\ref{alg:chunked_td_q_q}.
To model $\hat{P}^{\pi} (x_{t+1}, a_{t+1} | x_{t}, a_{t})$ in this case, we simply maintain empirical counts of each possible transition $n(x, a, x')$ and state-action pair $n(x, a)$ and let $\hat{P} (x_{t+1} | x_t, a_t) = \frac{n (x_t, a_t, x_{t+1})}{n (x_t, a_t)}$.
We decompose $\hat{P}^{\pi} (x_{t+1}, a_{t+1} | x_{t}, a_{t}) = \hat{P} (x_{t+1} | x_t, a_t) \pi (a_{t+1} | s_{t+1})$. 
Recall that $x_t$ is the same as $s_t$ in our experiments.

The tabular model is updated at every transition, and counts are incremented before they are used for computing $\hat{P}^{\pi} (x_{t+1}, a_{t+1} | x_{t}, a_{t})$.
This ensures that the counts are never zero for any observed transition.

\paragraph{Hyperparameter selection}

We consider multiple values of the learning rate $\alpha$ for SARSA($\lambda$) with $\lambda \in \{ 0, 1.0 \}$ and for Chunked SARSA.
We consider $\alpha \in 0.1 \times \{2^{-2}, 2^{-3}, 2^{-4}, 2^{-5}, 2^{-6}, 2^{-7}, 2^{-8}, 2^{-9}, 2^{-10}
      2^{-11}, 2^{-12}, 2^{-13} \}$.

The selected choices are provided in Table~\ref{tab:hgrid_cns}.
The selected learning rate for SARSA(0) and Chunked SARSA is the one with the least square error with the true value of $\Delta Q$ after 100000 episodes.

Since SARSA(1) is quite poor initially, 
we selected the learning rate with the least square error with $\Delta Q$ after 1000000 episodes instead of 100000 episodes.
SARSA(1) fares poorly as it tends to have negative values of $\Delta Q$ due to overestimating the value of some action. 

\begin{table*}[ht]
    \small
    \centering
    \caption{Selecting the learning rate for Chain-and-Split experiments.}
    \label{tab:hgrid_cns}
    \begin{tabular}{l  c}
      \toprule
      Algorithm &  Selected $\alpha$ \\
      \midrule
      SARSA(0)  & $0.1 \times 2^{-5}$ \\
      SARSA(1)  & $0.1 \times 2^{-11}$ \\
      Chunked SARSA & $0.1 \times 2^{-8}$  \\
      \bottomrule
    \end{tabular}
\end{table*}

Figure~\ref{fig:results_q_gap_simple_env} in the main paper shows results for $\lambda \in \{ 0, 1.0\}$ and for Chunked SARSA with the selected learning rate.

\subsubsection{Additional results}

Here we show that SARSA(0) and SARSA(1)/MC eventually achieve positive values of $\Delta Q$ with a longer evaluation.

We use the same hyperparameters as those selected in Table~\ref{tab:hgrid_cns} and evaluate for 1000000 episodes instead of the 100000 episode evaluation of the main paper.
Note that the chosen learning rates may not be the best selection for SARSA(0) and Chunked SARSA with 1000000 episodes.

These results are presented in Figure~\ref{fig:results_q_gap_simple_env_long}.

\begin{figure}
    \centering
    \includegraphics[width=0.7\textwidth]{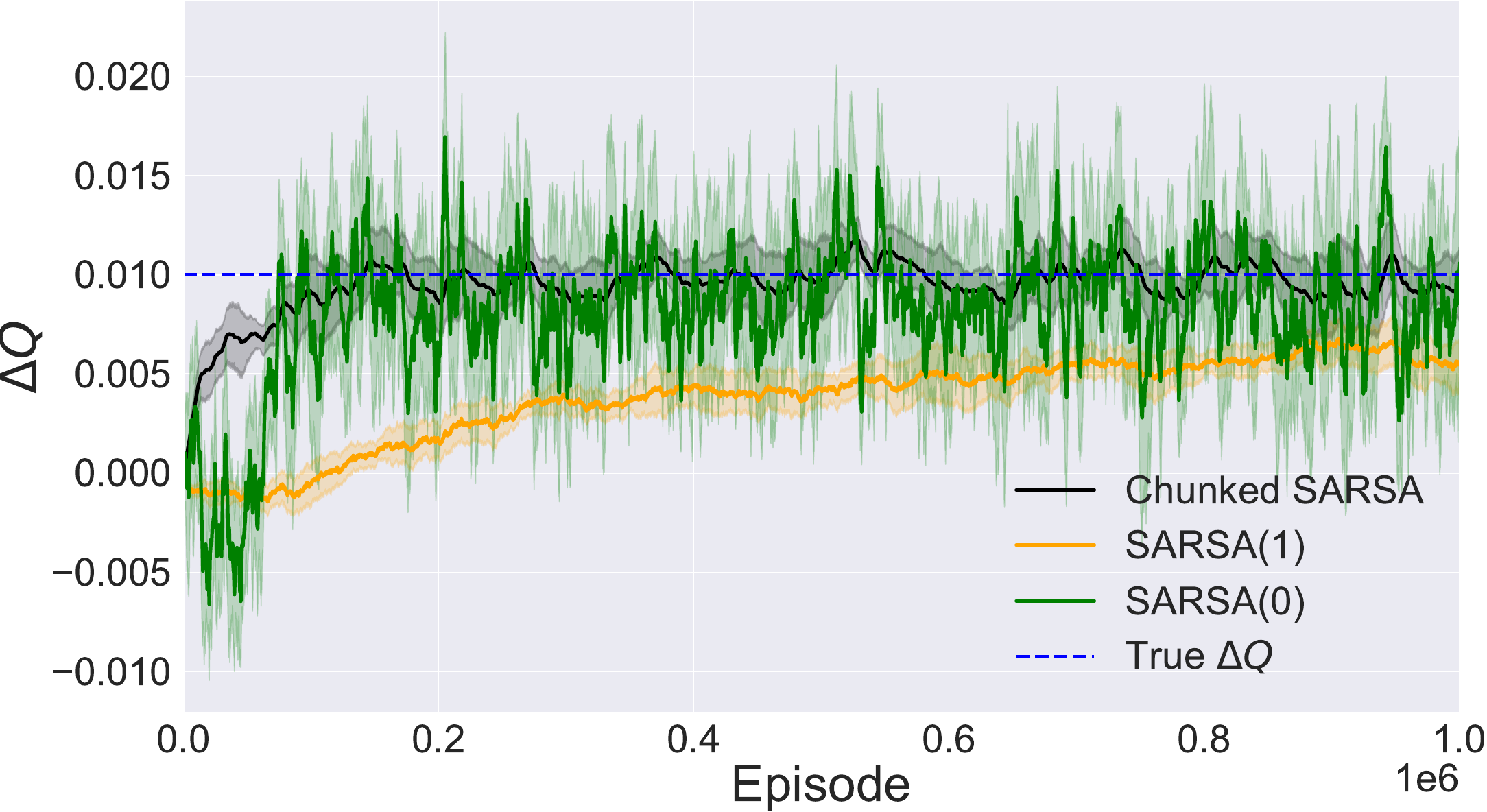}
    \caption{
    The action-value gap ($\Delta Q$) between optimal action $a_1$ and the maximum action value among the remaining actions.
    The true value of $\Delta Q = 0.01$. 
    Shading indicates 95\%
    bootstrapped confidence intervals over 10 independent trials.
    These results present evaluation over 10 times the number of episodes as those presented in Figure~\ref{fig:results_q_gap_simple_env}
    }
    \label{fig:results_q_gap_simple_env_long}
\end{figure}

\subsection{Accumulated-Charge}
\label{app:exps:acc_charge}

\subsubsection{Environment details}

The agent has two available actions in the initial state ($a_1$ and $a_2$) and only one available action ($a_1$) thereafter.
The environment is episodic with $H$ time steps. 
The discount factor $\gamma=1$.

The state has three components $s = (s^1, s^2, s^3)$. 
$s^1$ is a boolean indicating whether the first action was $a_1$,
$s^2$ is a non-negative integer that indicates the total charge accumulated so far,
and $s^3$ is the time step.

During the episode, at $k$ values of the time step, the agent stochastically accumulates `charge'.
The $k$ time steps where a stochastic charge is accumulated are fixed for the interaction (all episodes) but are randomly chosen for each independent initialization (seed) of the environment.
In our selected setting, the episode lasts $200$ steps, and the agent encounters $k=10$ accumulation points, where it accumulates charge sampled from a binomial variable with $n=H/k=20$ and $p=0.5$.

There are no rewards apart from those provided at the final time step.
The final reward depends on the amount of charge accumulated and on a bonus $b$ that depends on whether the first action was $a_1$ or $a_2$.
Choosing $a_1$ leads to an expected return of $b$, and choosing $a_2$ leads to an expected return of zero. We set $b=0.1$ in our experiment.

\paragraph{Reward definition}
The final reward (at $H$) is a sum of three terms $R_H = R^b + R^c + R^d$. 

Let $c_0 = 0.5$ if $s_H^1 = 1$ (i.e. the agent took $a_1$ at $s_0$).
Otherwise, if $s_H^1 = 0$ (i.e. the agent took $a_2$ at $s_0$), $c_0 = -0.5$.

We can now define the individual terms.
The bonus term is provided if the agent had taken $a_1$ in $s_0$, $R^b = s_H^1 b$.
The next term $R^c$ depends on the amount of charge accumulated on the trajectory, $R^c = s_H^2 c_0$.
The final component is $R^d = - c_0 p H$. Note that $\mathbb{E} [R^c  + R^d] = 0$, since $\mathbb{E} [ s_H^2 ] = pnk = pH$.

\paragraph{Randomization of charging time steps}

Evaluation in this environment may be considered as an evaluation over a distribution of MDPs.
The choice of the environment seed fixes the location of the time steps where the charge accumulates. These steps remain fixed for all episodes in the environment instance.

\subsubsection{Implementation details}
We learn tabular Q-value functions for SARSA($\lambda$) and Chunked SARSA.
The tabular Q-values are initialized at zero.

\paragraph{Chunked SARSA}
We use Algorithm~\ref{alg:chunked_td_q_q}.
We learn a neural network model to predict $\hat{P}^{\pi} (x_{t+1}, a_{t+1} | x_{t}, a_{t})$.
Recall that $x_t = s_t$ in our experiments.

We can decompose $\hat{P}^{\pi} (x_{t+1}, a_{t+1} | x_{t}, a_{t}) = \hat{P} (x_{t+1} | x_t, a_t) \pi (a_{t+1} | s_{t+1})$.

Since the state space is quite large, with 200 possible values of charge and time step, we learn a neural network model to  predict $\Delta s_{t+1} = s_{t+1} - s_t$ as a categorical distribution.
Using state-delta has the benefit of reducing the space of outputs under the model and is similar to models learned in video prediction.
The probability of the whole next state (or state-delta) is modeled independently, i.e., taken as the product of the probability of each individual component (each component is a categorical variable in $n=H/k$, the maximum change possible in a state component in this environment).
We use the probability of the actual delta and use $\hat{P} (\Delta s_{t+1} | s_t, a_t)$ as the model's predicted probability of the transition $\lambda_{t+1} = \hat{P} (\Delta s_{t+2} | s_{t+1}, a_{t+1}) \pi (a_{t+1} | s_{t+1})$.

Our neural network has 3 hidden layers with 256 units each and hyperbolic tangent activation functions.
The neural network is trained on a batch sampled from the replay buffer every $k_{model}$ steps in the environment.

\paragraph{Hyperparameter selection}
The hyperparameters relating to the neural network model were selected as  reasonable values based on preliminary experiments.
These are presented in Table~\ref{tab:nn_hparams_acc_charge}.
We use the Adam optimizer for training the neural network~\citep{kingma2015adam}.
All other hyperparameters match PyTorch defaults.

\begin{table*}[ht]
    \small
    \centering
    \caption{NN model hyperparameters for accumulated charge experiment.}
    \label{tab:nn_hparams_acc_charge}
    \begin{tabular}{l c c}
      \toprule
      Hyperparameter & Value \\
      \midrule
      Model trained every $k_{model}$ steps & 4 \\
      Batch size & 128 \\
      Model learning rate ($\eta$) &  0.0001 \\
      Weight decay for optimizer &  1e-6 \\
      Replay buffer size & 100000 \\
      \bottomrule
    \end{tabular}
\end{table*}

We present results for SARSA($\lambda$) with $\lambda \in \{0, 0.1, 0.2, 0.3, 0.4, 0.5, 0.6, 0.7, 0.8, 0.9, 0.95, 1\}$ and Chunked SARSA.
We consider the following learning rates ($\alpha$)for all approaches and report results for all settings, $\alpha \in 0.1 \times\{ 2^{3}, 2^{2}, 2^{1}, 2^{0}, 2^{-1}, 2^{-2}, 2^{-3}, 2^{-4}, 2^{-6},
2^{-8}, 2^{-10}, 2^{-12} \}$.

\subsubsection{Additional results}

In the main text, we presented the average number of regretful choices for each approach and learning rate.
Here, we present the same results with 95\% bootstrapped confidence intervals (see Figure~\ref{fig:acc_charge_with_err_bounds}).
The confidence intervals are quite wide for TD($\lambda$) approaches as they prefer $a_1$ or $a_2$ at random (across seeds) after the initial exploratory phase.
They fail to reliably prefer $a_1$, resulting in a high average number of regretful choices with high variance.

\begin{figure}
    \centering
    \includegraphics[width=0.7\textwidth]{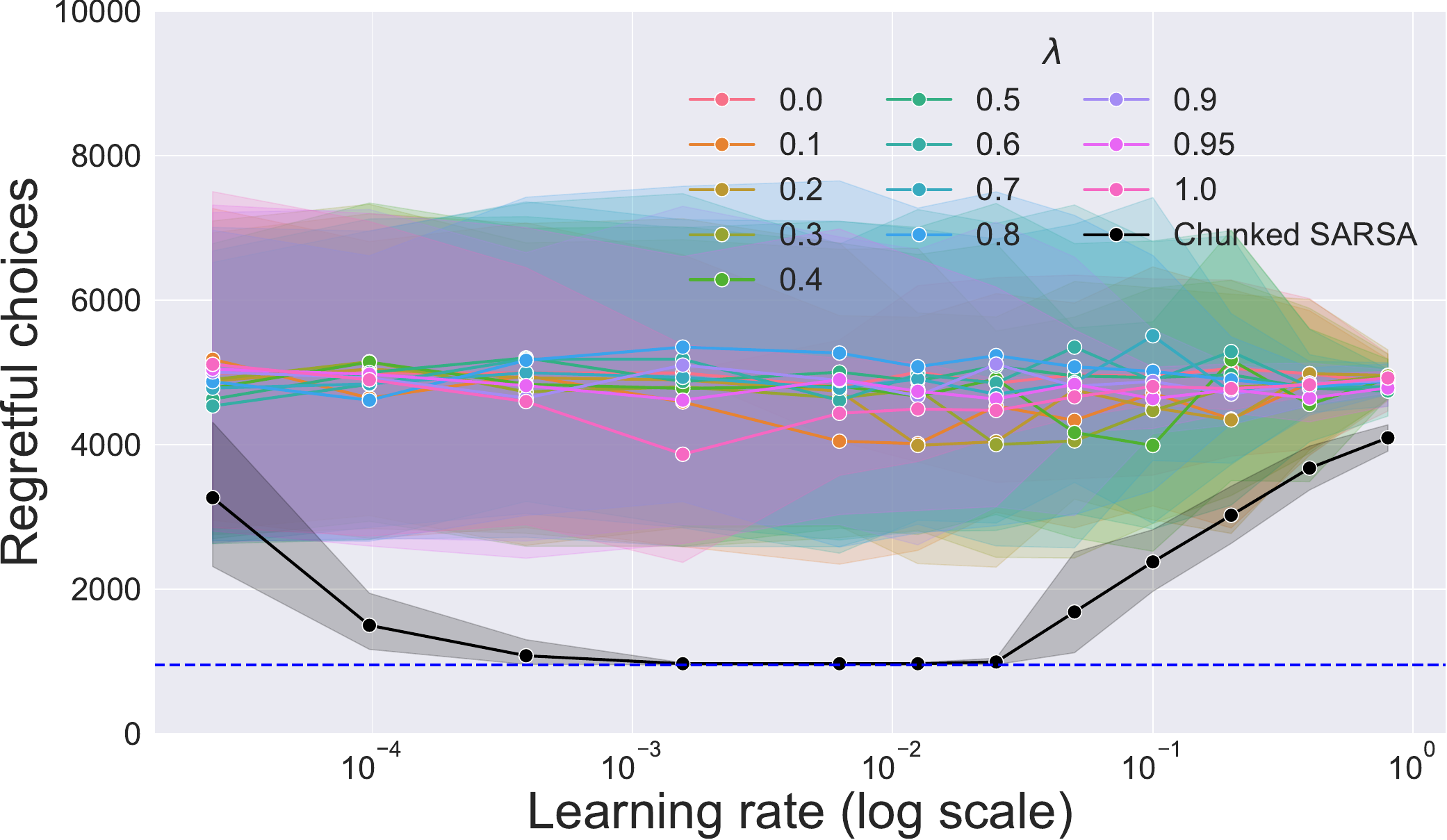}
    \caption{
    Results from the accumulated-charge environment.
    The average number of regretful choices over 10 runs for different algorithms with different learning rates for the accumulated-charge experiment.
    The shading represents 95\% bootstrapped confidence intervals.
    The dashed blue line represents the best possible number of expected regretful choices under the exploration scheme.
    }
    \label{fig:acc_charge_with_err_bounds}
\end{figure}

\subsection{Key-to-Door}
\label{app:exps:k2d}

\subsubsection{Environment details}
The state comprises of $n_d$ + 4 components, $s = (s_{key}, s_{door}, s_{d_1} \dots s_{d_{n_d}} s_{treasure}, s_{time} ).$ 
There are $n_d$ Boolean distractors ($ s_{d_1} \dots s_{d_{n_d}}$), which are sampled independently from a Bernoulli distribution with probability $p=0.5$.
One boolean indicates whether the agent has the key ($s_{key}$). 
Two other components are Booleans that indicate whether the agent is in front of a door ($s_{door}$), and if the current state has the treasure ($s_{treasure}$). 
The last component indicates the time step ($s_{time}$).
Distractors are absent (with a zero value) in the start state and states with treasure or door bits on.

The agent has two available actions at every step.
They correspond to ``pick key" and ``unlock door".
Using ``pick key" in the start state turns the ``key" boolean on.
If the agent has the key, using ``unlock door" at the penultimate state will lead to the treasure.
Other than this the actions have no effect on the state.
The environment is episodic with $H$ steps.
The discount factor $\gamma=1$.

The environment returns a vector of rewards, one for each state component.
If the treasure bit is on, it corresponds to a treasure reward of $0.01$.
Each distractor being on corresponds to a distracting reward of $0.01/(n_d)$.
All other components, the key, door, and time step, are associated with zero rewards. 
Concretely, the reward vector is a deterministic linear function of the next state, $\vec{r}_{t+1} = s_{t+1} \odot (0, 0, \frac{0.01}{n_d} \dots \frac{0.01}{n_d}, 0.01, 0)$, where $\odot$ represents the element-wise product.
The overall scalar reward can be obtained by summing over the reward vector.

In our experiment, we set $H=100$ and $n_d=4$.

\subsubsection{Implementation details}

We use variants of Chunked Expected-SARSA in these experiments (Algorithm~\ref{alg:chunked_td_q_v_f}).
Further, since the reward provided by the environment is a vector of reward components, we learn a separate Q-value for each component.
See Appendix~\ref{chunk-exp-sarsa-f} for details.

We learn tabular Q-value functions for each reward component. All values are initialized as zero.

\paragraph{C-factored}
The C-factored algorithm refers to Algorithm~\ref{alg:chunked_td_q_v_f}.
We decompose $\hat{P}^{\pi} (x^i_{t+1},| x_{t}) = \sum_a \hat{P} (x^i_{t+1} | x_t, a) \pi (a | s_{t}).$
Recall that $x_t = s_t$ in our experiments.

\paragraph{Hyperparameter selection}

We learn a neural network model that predicts the next state components given the current state and action ($\hat{P} (s_{t+1} | s_t, a)$).
All state components apart from the time step are Booleans, which we model as independent Bernoulli variables.
We predict the time step as a categorical variable with $H$ categories.

Our neural network has 2 hidden layers with 128 units each and hyperbolic tangent activation functions.
The neural network is trained on a batch sampled from the replay buffer every $k_{model}$ steps in the environment.

The hyperparameters relating to the neural network model were selected as  reasonable values based on preliminary experiments.
These are presented in Table~\ref{tab:nn_hparams_k2d}.
We use the Adam optimizer for training the neural network~\citep{kingma2015adam}.
All other hyperparameters match PyTorch defaults.

\begin{table*}[ht]
    \small
    \centering
    \caption{NN model hyperparameters for key-to-door}
    \label{tab:nn_hparams_k2d}
    \begin{tabular}{l c c}
      \toprule
      Hyperparameter & Value \\
      \midrule
      Model trained every $k_{model}$ steps & 1 \\
      Batch size & 64 \\
      Model learning rate ($\eta$) &  0.0002 \\
      Replay buffer size & 10000 \\
      \bottomrule
    \end{tabular}
\end{table*}
 
We conducted a sweep over learning rates for all approaches.
We show results for the best learning rate in terms of the minimum number of episodes in which the treasure was not collected (averaged over 10 seeds).
The considered candidates and the selected values are provided in Table~\ref{tab:hgrid_k2d}.

\begin{table*}[ht]
    \small
    \centering
    \caption{Selecting the learning rate for Key-to-Door experiments.}
    \label{tab:hgrid_k2d}
    \begin{tabular}{l  cc}
      \toprule
      Algorithm &  Candidates & Selected $\alpha$ \\
      \midrule
      Expected-SARSA(0) & $ 0.1 \times\{2^{2}, 2^{1}, 2^{0}, 2^{-1}, 2^{-2}, 2^{-3}, 
      2^{-4}, 2^{-5},
      2^{-6}, 2^{-7},
      2^{-8}, 2^{-9}
      \}$ & $0.1 \times 2^{1}$ \\
      Expected-SARSA(0.1) & 
      $ 0.1 \times\{2^{2}, 2^{1}, 2^{0}, 2^{-1}, 2^{-2}, 2^{-3}, 
      2^{-4}, 2^{-5},
      2^{-6}, 2^{-7},
      2^{-8}, 2^{-9}
      \}$& $0.1 \times 2^{2}$ \\
      Expected-SARSA(0.5) &       $ 0.1 \times\{2^{2}, 2^{1}, 2^{0}, 2^{-1}, 2^{-2}, 2^{-3}, 
      2^{-4}, 2^{-5},
      2^{-6}, 2^{-7},
      2^{-8}, 2^{-9}
      \}$ & $0.1 \times 2^{0}$\\
      Expected-SARSA(0.9) &  $ 0.1 \times\{2^{2}, 2^{1}, 2^{0}, 2^{-1}, 2^{-2}, 2^{-3}, 
      2^{-4}, 2^{-5},
      2^{-6}, 2^{-7},
      2^{-8}, 2^{-9}
      \}$ & $0.1 \times 2^{0}$ \\
      Expected-SARSA(1) & $ 0.1 \times\{2^{2}, 2^{1}, 2^{0}, 2^{-1}, 2^{-2}, 2^{-3}, 
      2^{-4}, 2^{-5},
      2^{-6}, 2^{-7},
      2^{-8}, 2^{-9}
      \}$ & $0.1 \times 2^{-3}$ \\
      C-default & $ 0.1 \times\{2^{2}, 2^{1}, 2^{0}, 2^{-1}, 2^{-2}, 2^{-3}, 
      2^{-4}, 2^{-5},
      2^{-6}, 2^{-7},
      2^{-8}, 2^{-9}
      \}$ & $0.1 \times 2^{0}$  \\
      C-factored & $ 0.1 \times\{2^{2}, 2^{1}, 2^{0}, 2^{-1}, 2^{-2}, 2^{-3} \}$ & $0.1 \times 2^{-1}$  \\
      \bottomrule
    \end{tabular}
\end{table*}

\subsubsection{Additional Results}

We present results for all considered learning rates (from Table~\ref{tab:hgrid_k2d}) in Figure~\ref{fig:k2d-all-lrs}.
We see that C-factored successfully collects the treasure in most episodes across multiple values of the learning rate.

\begin{figure}
    \centering\includegraphics[width=0.7\textwidth]{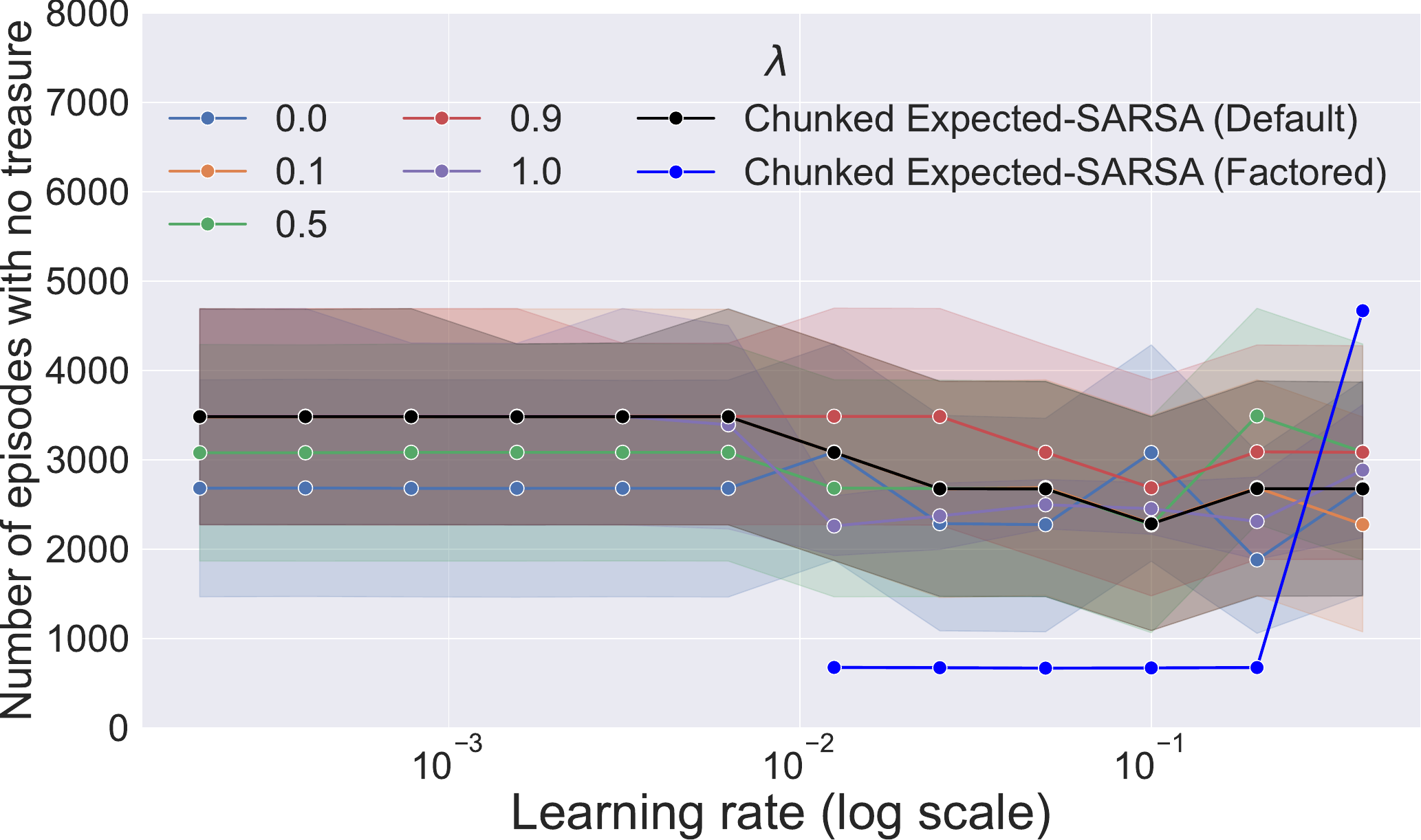}
    \caption{
    The solid line represents the average number of episodes in which the treasure was not collected.
    Shading represents 95\% bootstrapped confidence intervals.
    }
    \label{fig:k2d-all-lrs}
\end{figure}

\section{Variants of our algorithm}
\label{app:algorithm_variants}

In this section, we present and discuss variants of our chunking-based algorithms.

\subsection{n-step Chunked SARSA}

Section~\ref{sec:method:subsec:nstep} presents a chunked $n$-step method for bootstrapping from state value functions.
Here, we also present an $n$-step SARSA-like chunking procedure.

In this variant, which applies to bootstrapping from Q-values, we use $\hat{P}^{\pi}(X_{t+1} A_{t+1}| X_t, A_t)$, the probability of the next percept-action under the current policy. The following factorization can be used in MDPs with Markov policies,
\begin{equation}
\hat{P}^{\pi}(X_{t+1}, A_{t+1}| X_t, A_t) = \hat{P}(X_{t+1} | X_t, A_t) \pi( A_{t+1} | X_{t+1}), 
\end{equation}
where $\hat{P}(X_{t+1} | X_t, A_t)$ is the estimated transition probability.

Consider the following chunking strategy that applies to learning $\hat{Q}(s, a)$ in episodic MDPs: (1) We always keep the first state-action $(s_0, a_0)$ (2) For $t \geq 1$ we drop state-action $(s_t, a_t)$ with probability $\hat{P}^{\pi}(x_{t+1}, a_{t+1} | x_t, a_t)$ (3) Whenever we drop a percept-action, we sum the intermediate rewards such that the compressed and original episode have equivalent returns (4) We always keep the terminal state $s_T$.

The compressed episode of state-action pairs (and the final state) is used to construct targets for the state-action pairs encountered in the entire episode.

\subsection{Chunked SARSA and Chunked Expected-SARSA}

In this section, we present online algorithms for Chunked SARSA and Chunked Expected-SARSA.

\paragraph{Chunked SARSA}
Chunked SARSA is presented in Algorithm~\ref{alg:chunked_td_q_q}.
The forward view of the algorithm would correspond to using $\lambda_{t+1} = \hat{P}^{\pi} (x_{t+2}, a_{t+2} | x_{t+1}, a_{t+1})$ in Equation~\ref{eqn:variable_lambda_return}.
We used the Chunked SARSA algorithm for our experiments in Sections~\ref{exps:cns} and \ref{exps:acc_charge}.

\paragraph{Chunked Expected-SARSA}
Chunked Expected-SARSA is presented in Algorithm~\ref{alg:chunked_td_q_v}.
The forward view of the algorithm would correspond to using $\lambda_{t+1} = \hat{P}^{\pi} (x_{t+2} | x_{t+1})$.
As is the case with Expected-SARSA, the value of the next state is computed as a weighted average of Q-values under the policy.
We use Chunked Expected-SARSA for our experiment in Section~\ref{exps:k2d}.

An advantage of the Chunked Expected-SARSA over Chunked SARSA is in situations where multiple actions lead to the same next state.
Notice that Chunked SARSA would cut the trace when an action of low probability is taken, even if the next state is the same for all actions ($\lambda_{t} = \hat{P}(X_{t+1} | X_t, A_t) \pi( A_{t+1} | X_{t+1})$).
This would not happen with Chunked Expected-SARSA, as it averages over the consequence of all actions, i.e., $\lambda_{t} = \sum_a \hat{P}(X_{t+1} | X_t, a) \pi( a | X_{t})$.

\subsection{Component-wise chunked-TD}
\label{chunk-exp-sarsa-f}

The algorithm presented here follows from a combination of Chunked Expected-SARSA (Algorithm~\ref{alg:chunked_td_q_v}) and Q-decomposition~\citep{russell2003q}.

We assume the state has $d$ components, each associated with its own reward.
Formally, at each time step we receive $\vec{R}_{t+1} = (R^1_{t+1}, R^2_{t+1}, \dots R^d_{t+1})$ and $S_{t+1} = (S^1_{t+1}, S^2_{t+1}, \dots S^d_{t+1})$.
Our algorithm relies on the fact that each reward component is associated with a single state component, i.e., $R^i_{t+1} = f (S^i_{t+1})$.

The typical RL reward is the sum of all the component rewards $R_{t+1} = \sum_{i=1}^d R^i_{t+1}$, whose expected future sum we would like to maximize.

We learn $d$ Q-value functions $\hat{Q}^1 (s,a), \hat{Q}^2 (s,a) \dots \hat{Q}^d (s,a)$.
Each is learned by Chunked Expected-SARSA
with its own trace ($e^i (s, a) $), which is based on the predictability of each individual state/percept component $ \hat{P}^{\pi} (X^i_{t+1} | X_t)$.
Since the reward is the sum of individual components, the Q-value also decomposes in the same way.
Actions are taken according to the global/overall Q-value $\hat{Q} (s,a) = \sum_{i=1}^d \hat{Q}^i (s,a)$.

See
Algorithm~\ref{alg:chunked_td_q_v_f} for a complete description.
We call this algorithm \emph{C-factored} in our experiment in Section~\ref{exps:k2d}.

The algorithm that we call \emph{C-default} would use the probability of the \emph{entire} next percept $\hat{P}^{\pi} (X_{t+1} | X_t)$ to decay the trace.

It is important to note that the policy $\pi (.|s)$ is the global policy.
So, in the $\epsilon$-greedy case, the policy should be computed using the global Q-value.

\begin{algorithm}[tb]
\caption{Chunked SARSA}
\label{alg:chunked_td_q_q}
\begin{algorithmic}[1]
\ENSURE Eligibility $e(s, a) = 0 \ \forall (s, a) \in \mathcal{S} \times \mathcal{A}$ at the beginning of each episode
\FOR{each $(X_t, A_t, X_{t+1}, A_{t+1})$ of the episode}
\STATE Optionally train dynamics model

\FOR{each state-action pair $(s, a)$}
\STATE $e(s, a) \gets \gamma  \hat{P}^{\pi}( X_{t+1}, A_{t+1} | X_t, A_t) e(s, a)$ 
\ENDFOR
\STATE $e({S_t, A_t}) \gets e({S_t, A_t}) + 1$ 
\STATE $\delta_t \gets  R_{t+1} + \gamma \hat{Q}(S_{t+1}, A_{t+1}) - \hat{Q}(S_t, A_t)$
\FOR{each state-action pair $(s,a)$}
\STATE $\hat{Q}(s, a) \gets \hat{Q}(s, a) + \alpha \delta_t e(s, a)$
\ENDFOR
\ENDFOR
\end{algorithmic}
\end{algorithm}

\begin{algorithm}[tb]
\caption{Chunked Expected-SARSA}
\label{alg:chunked_td_q_v}
\begin{algorithmic}[1]
\ENSURE Eligibility $e(s, a) = 0 \ \forall (s, a) \in \mathcal{S} \times \mathcal{A}$ at the beginning of each episode
\FOR{each $(X_t, A_t, X_{t+1})$ of the episode}
\STATE Optionally train dynamics model

\FOR{each state-action pair $(s, a)$}
\STATE $e(s, a) \gets \gamma  \hat{P}^{\pi}( X_{t+1} | X_t) e(s, a)$ 
\ENDFOR
\STATE $e({S_t, A_t}) \gets e({S_t, A_t}) + 1$ 
\STATE $\delta_t \gets  R_{t+1} + \gamma \hat{V}(S_{t+1}) - \hat{Q}(S_t, A_t)$, where $\hat{V}(S_{t+1}) = \sum_a \pi(a | S_{t+1}) \hat{Q}(S_{t+1}, a)$
\FOR{each state-action pair $(s,a)$}
\STATE $\hat{Q}(s, a) \gets \hat{Q}(s, a) + \alpha \delta_t e(s, a)$
\ENDFOR
\ENDFOR
\end{algorithmic}
\end{algorithm}

\begin{algorithm}[tb]
\caption{Chunked Expected-SARSA with decomposed rewards}
\label{alg:chunked_td_q_v_f}
\begin{algorithmic}[1]
\ENSURE Eligibility $e^i(s, a) = 0 \ \forall (i, s, a) \in d \times \mathcal{S} \times \mathcal{A}$ at the beginning of each episode
\FOR{each $(X_t, A_t, X_{t+1})$ of the episode}
\STATE Optionally train dynamics model

\FOR{each component-state-action $(i, s, a)$}
\STATE $e^i(s, a) \gets \gamma  \hat{P}^{\pi}( X^{i}_{t+1} | X_t) e^i(s, a)$ 
\ENDFOR
\FOR{each component $i$}
\STATE $e^i({S_t, A_t}) \gets e^i({ S_t, A_t}) + 1$
\STATE $\delta^i_t \gets  R^i_{t+1} + \gamma \hat{V}^i(S_{t+1}) - \hat{Q}^i(S_t, A_t)$, where $\hat{V}^i(S_{t+1}) = \sum_a \pi(a | S_{t+1}) \hat{Q}^i(S_{t+1}, a)$

\FOR{each state-action pair $(s,a)$}
\STATE $\hat{Q}^i(s, a) \gets \hat{Q}^i(s, a) + \alpha \delta^i_t e^i(s, a)$
\ENDFOR

\ENDFOR
\ENDFOR
\end{algorithmic}
\end{algorithm}

\FloatBarrier
\section{Algorithms from Sutton and Singh 1994}
\label{app:sns94}

Here, we re-derive the approaches proposed by~\citet{sutton1994step} in order to compare them with the Chunked-TD approach proposed in this work. We modify the approaches slightly to include a reward at each step since~\citet{sutton1994step} assume the rewards only come at the end of the episode.

The basic goal of all the approaches highlighted by~\citet{sutton1994step} is to maintain 
\begin{equation}
\label{desired_condition}
    \hat{V}_{t^\prime}(s)=\frac{1}{n_{t^\prime}(s)}\left(\sum_{t< t^\prime:S_t=s}R_{t+1}+\hat{V}_{\tau(t,t^\prime)}(S_{t+1})\right)
\end{equation}
where $\hat{V}_{t^\prime}(s)$ represents the value estimate for $s$ after accounting for the transition observed at $t^\prime$, $n_{t^\prime}(s)$ is the total number of times $s$ has been visited, and $\tau(t,t^\prime)$ represents some time future time compared to $t$ to be defined. The approaches differ only in $\tau(t,t^\prime)$, that is, which time step's value estimator is used to evaluate the successor states. 
Equation~\ref{desired_condition} is only enforced at the end of episodes for those states $s$ which occur within each episode. 
The values of states which do not occur within an episode are left unchanged. 
Note that here, we use a shared time index across all episodes, so if an episode ends at time $t$, the next episode will begin at $t+1$.
We will assume acyclic MDPs as in~\citet{sutton1994step}.

One can imagine we will generally get a better estimator by pushing $\tau(t, t^\prime)$ as far into the future as possible relative to $t$ as this means the values used in the right-hand side of Equation~\ref{desired_condition} will be based on more recent information. However, doing so will also come at the price of algorithmic complexity.

\subsection{Naive TD(0)}
The simplest approach is just to use $\tau(t,t^\prime)=t+1$ and thus $\hat{V}_{t+1}(S_{t+1})$, that is, the value estimate at the time $S_{t+1}$ was entered. This can be achieved algorithmically with the following simple update on each transition:
\begin{align}
    n(S_t)&\leftarrow n(S_t)+1\\
    \hat{V}(S_t)&\leftarrow \hat{V}(S_t)+\frac{1}{n(S_t)}(R_{t+1}+\hat{V}(S_{t+1})-\hat{V}(S_t)),
\end{align}
which just computes an incremental average of the value estimates and rewards observed directly following each state $S$. This is easily seen to be $TD(0)$ with a learning rate of $\frac{1}{n(S_t)}$ and we can identify the target as $G_t^\lambda=R_{t+1}+\hat{V}(S_{t+1})$. This, however, suffers from only backing up reward information one step at a time.

\subsection{TD(1/n)}
A better approach might be to use the value estimates computed at the end of the episode in the update for each state, effectively equivalent to updating value functions in reverse order from the end of the episode to the start. We can write this as $\tau(t,t^\prime)=T(t)$ where $T(t)$ represents the termination of the episode which is in progress at time $t$ and we define $\hat{V}(\bot)=\hat{V}_{t^\prime}(\bot)=0$ for the terminal state $\bot$ at all times $t^\prime$. 

To express this algorithmically, let's define $\Tilde{V}_t$ to be the value we will set $\hat{V}_{T(t)}(S_t)$ to at the termination time $T(t)$. Note that this is different from the $\lambda$-return, toward which $\hat{V}(S_t)$ is normally updated only incrementally. With this definition, we can express the approach as follows:
\begin{align}
    n(S_t)&\leftarrow n(S_t)+1
\end{align}
\begin{align}
\label{eqn:lambda_one_over_n_update}
   \Tilde{V}_t&\leftarrow \hat{V}(S_t)+\frac{1}{n(S_t)}(R_{t+1}+\Tilde{V}_{t+1}-\hat{V}(S_t)).
\end{align}
Again, this is just performing an incremental average, but this time using the final value target $\Tilde{V}_t$ for each state rather than its value at the time the state is visited.~\footnote{Note that the assumption of acyclic MDPs avoids possible circular reference here if a particular state occurs multiple times in an episode.} We can identify the $\lambda$-return which acts as the update target for $\hat{V}(S_t)$ as $G_t^\lambda=R_{t+1}+\Tilde{V}_{t+1}$. We can then derive a recursive expression for $G_t^\lambda$ as follows
\begin{align*}
\Tilde{V}_t&=\hat{V}(S_t)+\frac{1}{n(S_t)}(G_t^\lambda-\hat{V}(S_t))\\
G_t^\lambda&=R_{t+1}+\Tilde{V}_{t+1}\\
\implies G_t^\lambda&=R_{t+1}+\frac{1}{n(S_{t+1})}G_{t+1}^\lambda+\frac{n(S_{t+1})-1}{n(S_{t+1})}\hat{V}(S_{t+1}).
\end{align*}
So we have in this case $\lambda_t=\frac{1}{n(S_{t+1})}$.

\paragraph{Equivalence between TD($1/n$) and corresponding $\lambda$-return}
We can confirm that the TD$(1/n)$ algorithm (Algorithm~\ref{alg:state_count_td_lambda}) achieves the update of Equation~\ref{eqn:lambda_one_over_n_update} in an incremental manner.

We consider the update term $u_t$ from Equation~\ref{eqn:lambda_one_over_n_update} for state $S_t$,
\begin{align*}
    u_t = \Tilde{V}_t - \hat{V}(S_t) &= \lambda_t(G_t^\lambda-\hat{V}(S_t)),
\end{align*}
where $\lambda_t = 1/n(S_t)$. Substituting the recursive relation for $G_t^{\lambda}$, we get
\begin{align*}
    u_t &= \lambda_t( R_{t+1} + \lambda_{t+1} G_{t+1}^{\lambda} + (1-\lambda_{t+1}) \hat{V}(S_{t+1}) - \hat{V}(S_{t})).
\end{align*}
We can add and subtract \textcolor{blue}{$\lambda_{t+1} \hat{V}(S_{t+1})$} inside the bracket, to get the following,
\begin{align*}
    u_t &= \lambda_t( R_{t+1} + (1-\lambda_{t+1}) \hat{V}(S_{t+1}) \textcolor{blue}{+ \lambda_{t+1} \hat{V}(S_{t+1})} - \hat{V}(S_{t})
    + \lambda_{t+1} G_{t+1}^{\lambda} \textcolor{blue}{- \lambda_{t+1} \hat{V}(S_{t+1})}), \\
    u_t &= \lambda_t(\delta_t + \lambda_{t+1}(G_{t+1}^{\lambda} - \hat{V}(S_{t+1}))),
\end{align*}
where $\delta_t = R_{t+1} + \hat{V}(S_{t+1}) - \hat{V}(S_t)$ is the standard TD-error.

Unrolling further, we obtain that the value of $S_t$ is updated as
\begin{align}
\label{eqn:total_update_for_td_one_over_n}
    u_t = \sum_{k=t}^{T-1} \delta_k \prod_{i=t}^{k} \lambda_i.
\end{align}

Now, we can verify that this is exactly the update TD($1/n$) achieves by the end of the episode.

Since the MDP is assumed to be acyclic, an encountered state is never seen again in that episode. Once a state is encountered, it is `eligible' for all future TD-errors.
For a transition at time step $t$, $S_t \rightarrow S_{t+1}$, the immediate update for $S_t$ is $\lambda_t \delta_t$, since $e(S_t)=1$ (see Line 6 of Algorithm~\ref{alg:state_count_td_lambda}).
At the next time step ($t+1$), $e(S_t) = \lambda_t$, and the update for $S_t$ would be $\lambda_t \lambda_{t+1} \delta_{t+1}$.
Thus, the sum of updates for $S_t$ by the end of the episode is $\sum_{k=t}^{T-1} \delta_k \prod_{i=t}^{k} \lambda_i$, the same as Equation~\ref{eqn:total_update_for_td_one_over_n}.

\begin{algorithm}[t]
\caption{TD$(1/n)$}\label{alg:state_count_td_lambda}
\begin{algorithmic}[1]
\ENSURE Trace $e(i) = 0 \ \forall i$ at the beginning of each episode
\FOR{each step of the episode}
\STATE $n(S_t) \gets n(S_t) + 1$
\STATE $e({S_t}) \gets 1$
\STATE $\delta_t \gets R_{t+1} + \hat{V}(S_{t+1}) - \hat{V}(S_t)$
\FOR{each state $s$}
\STATE $\hat{V}(s) \gets \hat{V}(s) + \frac{1}{n(S_t)} \delta_t e(s)$
\STATE $e(s) \gets e(s)\frac{1}{n(S_t)}$
\ENDFOR
\ENDFOR
\end{algorithmic}
\end{algorithm}

\subsection{TDC}
We can take the approach of using future estimates even further by retroactively correcting the value estimates used in past episodes each time a particular transition is revisited. We can express this as $\tau(t,t^\prime)=T(t^\prime)$. This means rather than simply using value estimates at the end of the episode in which a particular successor state was visited, we retroactively correct past value estimates used in Equation~\ref{desired_condition} to be equal to those at the end of the current episode for each visited state $s$. It now becomes a little more involved to formulate incrementally. In particular, we will have to maintain some extra information.

Towards formulating an incremental update, define $n(s^\prime,s)$ as the number of times the transition $(s^\prime,s)$ has been visited. Also, define $\hat{V}(s^\prime,s)=\hat{V}_{T(s^\prime,s)}(s^\prime)$ where $T(s^\prime,s)$ is the termination time of the most recent episode in which the transition from $s$ to $s^\prime$ was visited. The reason for maintaining these quantities is essentially so we can subtract the older estimate $\hat{V}(s^\prime,s)$ from the estimator $\hat{V}(s)$ in order to replace it with an updated value each time the transition is visited. With these definitions in place, we are ready to define 
$\Tilde{V}_t$ which, as in the previous section, represents the value we will set $\hat{V}_{T(t)}(S_t)$ to at the termination time $T(t)$.
\begin{align}
    n(S_{t+1},S_t) &\leftarrow n(S_{t+1},S_t) + 1 \\
    n(S_t) &\leftarrow n(S_t) + 1 \\
    \Tilde{V}_t &  \!\begin{multlined}[t]
    \leftarrow\frac{n(S_t) - 1}{n(S_t)}\hat{V}(S_t) - \frac{n(S_{t+1},S_t) - 1}{n(S_t)}\hat{V}(S_{t+1},S_t) \\
    + \frac{n(S_{t+1},S_t)}{n(S_t)}\Tilde{V}_{t+1} + \frac{1}{n(S_t)}R_{t+1}.
    \end{multlined}
\end{align}
To see how these updates enforce Equation~\ref{desired_condition}, assume that at time $t-1$ Equation~\ref{desired_condition} holds for $S_t$. Then, at time $t$, we visit $S_t$ and wish to make the correction for the new visit. Because we maintain $\hat{V}(s^\prime,s)=\hat{V}_{T(s^\prime,s)}(s^\prime)$, we can write
\begin{equation}
    \hat{V}(S_t)=\bar{R}+\frac{1}{n(S_t)-1}\left(\sum_{s^\prime\neq S_t}n(s^\prime,S_t)\hat{V}(s^\prime,S_t)+(n(S_{t+1},S_t)-1)\hat{V}(S_{t+1},S_t)\right),
\end{equation}
at the time $t$ for which we are computing $\Tilde{V}_t$, where $\bar{R}$ is the average of past rewards following state $S_t$. Substituting this into the expression for $\Tilde{V}_t$ we get:
\begin{align}
\Tilde{V}_t&=\begin{multlined}[t]
\frac{1}{n(S_t)}\Bigl(\sum_{s^\prime\neq S_t}(n(s^\prime,S_t)-1)\hat{V}(s^\prime,S_t)+(n(S_{t+1},S_t)-1)\hat{V}(S_{t+1},S_t)\\
-(n(S_{t+1},S_t) - 1)\hat{V}(S_{t+1},S_t)+n(S_{t+1},S_t)\Tilde{V}_{t+1}+(n(S_t)-1)\bar{R}+R_{t+1}\Bigr).
\end{multlined}
\end{align}
In effect, the update subtracts the outdated estimate from the sum, replaces it with the newer estimate, updates the incremental average for the reward, and corrects the denominator to account for the additional visit to $S_t$. Having defined $\Tilde{V}_t$, we can now try to rewrite it in a form that looks more like a conventional TD update
\begin{equation}
\Tilde{V}_t = \hat{V}(S_t)+\frac{1}{n(S_t)}(R_{t+1}+n(S_{t+1},S_t)\Tilde{V}_{t+1}-(n(S_{t+1},S_t) - 1)\hat{V}(S_{t+1},S_t)-\hat{V}(S_t)).
\end{equation}
The analogy to the lambda return in this case is
\begin{equation}
G_t^\lambda = R_{t+1}+n(S_{t+1},S_t)\Tilde{V}_{t+1}-(n(S_{t+1},S_t) - 1)\hat{V}(S_{t+1},S_t).
\end{equation}
Which we can once again put in a recursive form
\begin{align}
G_t^\lambda &= R_{t+1}+n(S_{t+1},S_t)\Tilde{V}_{t+1}-(n(S_{t+1},S_t) - 1)\hat{V}(S_{t+1},S_t)\\
\Tilde{V}_t &= \hat{V}(S_t)+\frac{1}{n(S_t)}(G_t^\lambda-\hat{V}(S_t))\\
\implies G_t^\lambda &=\begin{multlined}[t]R_{t+1}+n(S_{t+1},S_t)\frac{n(S_{t+1})-1}{n(S_{t+1})}\hat{V}(S_{t+1})-\\(n(S_{t+1},S_t) - 1)\hat{V}(S_{t+1},S_t)+\frac{n(S_{t+1},S_t)}{n(S_{t+1})}G_{t+1}^\lambda
\label{eqn:recursive_form_tdc_update}
\end{multlined}
\end{align}
This looks like a $\lambda$-return with $\lambda=\frac{n(S_{t+1},S_t)}{n(S_{t+1})}$, except that instead of $(1-\lambda)\hat{V}(S_{t+1})$ we have a rather elaborate difference term. However, note that in the case where $\hat{V}(S_{t+1},S_t)=\hat{V}(S_{t+1})$ this extra term reduces to simply $\left(1-\frac{n(S_{t+1},S_t)}{n(S_{t+1})}\right)\hat{V}(S_{t+1})$. Thus, as long as $\hat{V}(S_{t+1},S_t)$ is up to date, this is indeed just a normal $\lambda$-return with $\lambda=\frac{n(S_{t+1},S_t)}{n(S_{t+1})}$. Otherwise, it uses some extra machinery to subtract the outdated value estimate from the current target before adding the new one.

Note that $\frac{n(S_{t+1},S_t)}{n(S_{t+1})}$ is the transition probability under an empirical \textit{backward} model. Hence the TDC algorithm essentially corresponds to setting lambda based on estimated backward transition probabilities. 
One could also write this in terms of forward model probabilities as $\frac{n(S_{t+1},S_t)}{n(S_{t})}\frac{n(S_t)}{n(S_{t+1})}$ by multiplying by the visitation ratio.
As we show in the next section, the online backward view of the algorithm can be implemented using a forward model.
The difference arises from whether the learning rate for the update is set to be equal to the count for the state being updated itself or the future state at which the update actually occurs in the backward view.
In Chunked-TD we use a constant learning rate, so the correspondence is not exact.

\begin{algorithm}[t]
\caption{TDC}\label{alg:corrected_td}
\begin{algorithmic}[1]
\ENSURE Trace $e(i) = 0 \ \forall i$ at the beginning of each episode
\FOR{each step of the episode}
\STATE $n(S_{t+1}, S_t) \gets n(S_{t+1}, S_t) + 1$
\STATE $n(S_t) \gets n(S_t) + 1$
\STATE $e({S_t}) \gets 1$ 
\STATE $\delta'_t \gets  R_{t+1} + \hat{V}(S_{t+1}) + (n(S_{t+1}, S_t) - 1) [\hat{V}(S_{t+1}) - \hat{V}(S_{t+1}, S_t)]  - \hat{V}(S_t)$
\FOR{each state $s$}
\STATE $\hat{V}(s) \gets \hat{V}(s) + \frac{1}{n(S_t)} \delta'_t e(s)$
\STATE $e(s) \gets e(s)\frac{n(S_{t+1}, S_t)}{n(S_t)} $
\ENDFOR
\ENDFOR
\FOR{each non-terminal $S_{t+1}$ in the episode}
\STATE $\hat{V}(S_{t+1}, S_t) = \hat{V}(S_{t+1})$
\ENDFOR
\end{algorithmic}
\end{algorithm}

\paragraph{Equivalence between TDC and corresponding $\lambda$-return}

Proceeding similarly to our analysis for TD$(1/n)$ in the previous section, let us consider the update ($u_t$) made by our recursive lambda return from Equation~\ref{eqn:recursive_form_tdc_update},
\begin{align*}
    u_t = \Tilde{V}_t - \hat{V}(S_t) &= \frac{1}{n(S_t)}(G_t^\lambda-\hat{V}(S_t)).
\end{align*}

We use the same trick of expanding $G_t^{\lambda}$ recursively, to get
\begin{align*}
    u_t = \frac{1}{n(S_t)}(R_{t+1} + n(S_{t+1}, S_t) \left ( 1 - \frac{1}{n(S_{t+1})}\right ) \hat{V} (S_{t+1}) - \left (n(S_{t+1}, S_t) - 1 \right) \hat{V} (S_{t+1}, S_t) \\
    + \lambda_{t+1} G_{t+1}^{\lambda} -
    \hat{V}(S_t)),
\end{align*}
where $\lambda_{t+1} = \frac{n(S_{t+1}, S_t)}{n(S_{t+1})}$.

The above can be re-expressed as
\begin{align*}
    u_t = \frac{1}{n(S_t)}(R_{t+1} +  \hat{V} (S_{t+1}) + \left (n(S_{t+1}, S_t) - 1 \right) \left( \hat{V} (S_{t+1}) - \hat{V} (S_{t+1}, S_t) \right )\\
    + \lambda_{t+1} G_{t+1}^{\lambda} - \lambda_{t+1} \hat{V} (S_{t+1}) - 
    \hat{V}(S_t)).
\end{align*}
Using the definition of the corrected-TD error ($\delta'_t$) from Algorithm~\ref{alg:corrected_td},
\begin{align*}
 u_t = \frac{1}{n(S_t)} \left ( \delta'_t + \lambda_{t+1} (G_{t+1}^{\lambda} - \hat{V} (S_{t+1})) \right )   
\end{align*}
Unrolling the recursion, we get
\begin{align}
\label{eqn:total_update_for_tdc}
 u_t = \frac{1}{n(S_t)} \sum_{k=t}^{T-1} \delta'_k \prod_{i=t+1}^{k} \lambda_i.
\end{align}

Now let us consider the sequence of updates for a state $S_t$ under Algorithm~\ref{alg:corrected_td}. 

The first term of the update $u_t^1$ is $$u_t^{1} = \frac{1}{n(S_t)} \cdot \delta'_t \cdot 1$$
These updates are written as update = learning rate $\cdot$ corrected-td-error $\cdot$ trace value of $S_t$.

Now, in the next update the learning rate is $1/n(S_{t+1})$, and the trace has been decayed by $n(S_{t+1}, S_t )/n(S_{t})$, giving us
$$u_t^{2} = \frac{1}{n(S_{t+1})} \cdot \delta'_{t+1} \cdot \frac{n(S_{t+1}, S_t )}{n(S_{t})}$$
Similarly,
$$u_t^{3} = \frac{1}{n(S_{t+2})} \cdot \delta'_{t+2} \cdot \frac{n(S_{t+1}, S_t )}{n(S_{t})} \cdot \frac{n(S_{t+2}, S_{t+1} )}{n(S_{t+1})}$$
Summing all the updates to and taking $1/n(S_t)$ common, we get $ \frac{1}{n(S_t)} \sum_{k=t}^{T-1} \delta'_k \prod_{i=t+1}^{k} \lambda_i$, which matches the update from Equation~\ref{eqn:total_update_for_tdc}.

\end{document}